%% file: main.tex
\RequirePackage[svgnames,table]{xcolor}

\documentclass[11pt,letterpaper,logo]{yalearxiv}

\input{preamble}
\input{command}

\title{Kalman Delta Networks: Uncertainty-aware Associative Memory}

\runningtitle{Kalman Delta Networks}

\usepackage{fontawesome5}
\usepackage{colortbl}

\definecolor{yaleblue}{RGB}{0,58,112}
\newcommand{\yale}{%
  \textsuperscript{%
    {\usefont{T1}{pbk}{m}{n}\textcolor{yaleblue}{\textbf{Y}}}%
  }%
}

\author{Ngoc Bui\yale, Tinglin Huang\yale, Rex Ying\yale\\
\yale Department of Computer Science, Yale University\\
\faGithub~\textbf{Source Code:} \href{https://github.com/ngocbh/kalman-delta-networks}{\texttt{https://github.com/ngocbh/kalman-delta-networks}}
}

\hypersetup{colorlinks=true, linkcolor=blue!50!black, citecolor=blue!50!black,
            urlcolor=blue!50!black}

\begin{abstract}
\vspace{-1mm}
{\centering\section*{Abstract}}
Linear attention enables efficient long-context inference by compressing token history into a fixed-size recurrent memory. This compression makes each update a trade-off between incorporating new information and preserving useful associations. Models such as DeltaNet, Gated DeltaNet, and KDA predict write strength from the current token representation, without explicitly tracking uncertainty in the stored memory. Yet this uncertainty matters: a new observation should have greater influence when the existing association is uncertain and less when it is already well supported. We introduce \textit{Kalman Delta Networks} (KDNs), a family of linear-attention models that explicitly track memory uncertainty to guide each update. By formulating associative memory as a linear–Gaussian state-space model, KDNs propagate both the memory estimate and its uncertainty, using the Kalman gain to balance accumulated evidence against the reliability of new observations. This formulation also recovers standard delta-rule updates by replacing tracked covariance with a token-predicted isotropic surrogate. To support hardware-efficient training and inference, we derive \textit{Diagonal KDN} and \textit{Isotropic KDN}, which retain one uncertainty value per key channel and per head, respectively. Their uncertainty updates admit associative scans with logarithmic parallel depth, requiring only $O(d_k)$ and $O(1)$ auxiliary state per head. Across controlled pretraining at 750M and 1.3B parameters, both variants consistently improve perplexity and mean downstream accuracy over the evaluated state-of-the-art linear-attention baselines.
\end{abstract}

\begin{document}

\maketitle

\section{Introduction}
\label{sec:intro}

Self-attention enables flexible retrieval from the preceding context by allowing each query to
aggregate the values associated with earlier keys~\citep{vaswani2017attention}. This
flexibility, however, requires retaining the full token history, so the autoregressive key--value
cache grows with context length while query--key interactions remain quadratic in sequence length.
Linear attention instead compresses the prefix into a fixed-size recurrent state that can be
updated online and evaluated in parallel by a scan~\citep{katharopoulos2020transformers}. This
efficiency turns attention into an online memory-management problem: each token must edit a
compressed summary of the past before the model knows which associations future queries will need.
An update that is too timid preserves stale information; one that is too aggressive destroys useful
memory.

Delta-rule mixers address this problem with a more selective update: they first read the memory's prediction
for the current key, then write only the residual~\citep{schlag2021linear,yang2024parallelizing}.
This update is commonly interpreted as a single online gradient step on the fast-weight memory's
instantaneous squared prediction loss. Gated DeltaNet~\citep{yang2024gdn} and
KDA~\citep{kimiteam2025kimilinear} retain this residual write while adding Mamba-style state-space
transitions~\citep{gu2023mamba,dao2024transformers} that decay stored content at scalar or
channel-wise rates. Their transition and write
are therefore typically understood through different lenses: state-space dynamics and online
optimization, respectively. More importantly, their write strength is predicted from the current
token representation alone, without accounting for the memory's confidence in the stored association.
Without tracking this uncertainty, they lack an explicit basis for deciding whether a large residual
should revise a tentative association or be discounted to protect a well-supported association.
This risks overwriting reliable associations or leaving uncertain ones insufficiently corrected.

\textbf{In this paper,} we provide a unified view of the recurrent updates in delta-rule mixers through a
linear--Gaussian state-space formulation, under which uncertainty emerges as their missing state
variable. Specifically, we model the recurrent memory as an estimate of a latent, non-stationary
key--value map. Each token supplies a noisy observation of that map at one key, while the learned
transition describes how the map persists or drifts between observations. Under linear--Gaussian
assumptions, the Kalman filter~\citep{kalman1960new} is the optimal recursive estimator that
propagates both the memory estimate and its covariance, balancing memory uncertainty against
observation noise to determine the correction gain. The resulting update retains the 
residual delta-write form while assigning greater
weight to new evidence when the memory is uncertain and protecting associations supported by
repeated observations. This formulation leads to \emph{Kalman Associative Memory}, which
generalizes delta-rule mixers through explicit covariance tracking. DeltaNet, Gated DeltaNet, and
KDA thus emerge as covariance-free approximations that retain different memory transitions while omitting this uncertainty recursion.

The exact Kalman update, however, is not a practical linear-attention layer. It
carries a dense $d_k\times d_k$ covariance per head, and its gain depends on a state-dependent
Riccati recurrence that is poorly suited to efficient parallel scans. To make uncertainty-aware
filtering practical, we introduce \emph{Kalman Delta Networks (KDNs)}, a family of scan-compatible
approximations to Kalman Associative Memory. We develop two special cases. \emph{Diagonal KDN}
imposes diagonal structure on the transition, process noise, and predictive covariance. Because
each exact update still yields a dense posterior covariance, we derive an online mean-field
variational update that projects the posterior back onto the diagonal family by minimizing reverse
KL. Conditional on the retained diagonal predictive prior, this projection preserves the exact
one-step posterior mean. To mitigate excessive overwrite under this approximation, we introduce an
information-scaling factor that calibrates the retained uncertainty. Diagonal KDN maintains one
uncertainty value per key channel and requires $O(d_k)$ auxiliary state per head.
In addition, we introduce \emph{Isotropic KDN}, which uses one uncertainty scalar per head,
trading expressiveness for $O(1)$ auxiliary state and minimal overhead compared to KDA.

Empirically, under parameter-matched recurrent-only pretraining on
FineWeb-Edu~\citep{penedo2024fineweb}, both KDN variants achieve lower WikiText and LAMBADA
perplexity and higher mean six-task zero-shot accuracy than every evaluated state-of-the-art
linear-time recurrent mixer---including Mamba-3~\citep{lahoti2026mamba3},
KDA~\citep{kimiteam2025kimilinear}, and GDN-2~\citep{hatamizadeh2026gdn2}---at both 750M/50B
and 1.3B/100B. Diagonal KDN also achieves the highest observed 14-cell RULER
aggregate~\citep{hsieh2024ruler} at both scales. Our contributions are:

\begin{itemize}[leftmargin=1.4em,itemsep=3pt,topsep=2pt]
  \item \textbf{A principled state-space view of delta-rule models.} We cast the delta rule as an
  innovation update in a linear--Gaussian state-space model, connecting delta-based recurrent
  mixers to the Mamba lineage~\citep{gu2023mamba}. DeltaNet, Gated DeltaNet, and KDA use
  identity, scalar, and diagonal transitions, respectively; unlike Mamba's control-driven additive
  input write, they correct a predicted key--value map with a key-conditioned residual, while using
  a token-predicted rather than covariance-derived gain.
  \item \textbf{A hardware-efficient algorithm for uncertainty-aware associative memory.} We derive Diagonal KDN through online
  variational inference and introduce Isotropic and Diagonal variants with $O(1)$ and $O(d_k)$
  uncertainty state per head, respectively. Their uncertainty updates admit an associative gain
  scan followed by the usual affine memory scan.
  \item \textbf{Overwrite analysis and empirical evidence.} We identify how the diagonal
  uncertainty approximation can underprotect stored key directions, introduce information scaling to control future
  overwrite, and show improvements over the evaluated state-of-the-art delta-rule models and
  Mamba-3 variants in both reported perplexities and mean six-task zero-shot accuracy.
\end{itemize}

\section{Preliminaries}
\label{sec:preliminaries}

\paragraph{From self-attention to recurrent memory.}
Causal self-attention~\citep{vaswani2017attention} retains all preceding key--value pairs for
flexible retrieval, whereas linear attention~\citep{katharopoulos2020transformers} compresses the
prefix into a fixed-size matrix $\ve{S}_t\in\R^{d_k\times d_v}$. Its write and read operations are:
\begin{equation}
  \text{(write)}\quad \ve{S}_t=\ve{S}_{t-1}+\ve{k}_t\ve{v}_t^{\tr},
  \qquad
  \text{(read)}\quad \ve{o}_t=\ve{S}_t^{\tr}\ve{q}_t.
  \label{eq:linear-attention-main}
\end{equation}
Queries read from this fixed-size state, avoiding a growing cache. Its additive write, however,
cannot revise or forget stored associations.

\paragraph{Delta-rule mixers.}
Delta-rule mixers address this limitation by first predicting the memory through a transition
$\ve{D}_t$ and then correcting its prediction at the current key with write strength $\beta_t$
\citep{schlag2021linear,yang2024parallelizing}:
\begin{equation}
\begin{aligned}
  \widehat{\ve{S}}_t
  &=\ve{D}_t\ve{S}_{t-1},
  &\quad \ve{S}_t
  &=\widehat{\ve{S}}_t
    +\beta_t\ve{k}_t
      \big(\ve{v}_t-\widehat{\ve{S}}_t^{\tr}\ve{k}_t\big)^{\tr}\\
  &&&=(\ve{I}-\beta_t\ve{k}_t\ve{k}_t^{\tr})\ve{D}_t\ve{S}_{t-1}
    +\beta_t\ve{k}_t\ve{v}_t^{\tr}.
\end{aligned}
\label{eq:unified-delta-rule}
\end{equation}
DeltaNet uses $\ve{D}_t=\ve{I}$, making writes selective but leaving untouched associations
unchanged. Gated DeltaNet adds scalar decay $\ve{D}_t=\alpha_t\ve{I}$ to forget stale content
globally~\citep{yang2024gdn}; KDA generalizes this to
$\ve{D}_t=\diag(\ve{\alpha}_t)$, allowing different key channels to retain information at different
rates~\citep{kimiteam2025kimilinear}. The progression is therefore from local residual correction,
to global forgetting, to channel-wise forgetting. In all three models, however, the scalar
$\beta_t$ is predicted from the current token rather than derived from confidence in the stored
association. Appendix~\ref{app:attention-ssm-background} reviews the standard attention and
state-space formulations.

\paragraph{Notation.}
For matrices $\ve{X}=[\ve{x}_1,\ldots,\ve{x}_{d_v}]$ and
$\ve{M}=[\ve{m}_1,\ldots,\ve{m}_{d_v}]$, we write
\begin{equation}
  \ve{X}\sim\N_{\mathrm{col}}(\ve{M},\ve{P})
  \quad\Longleftrightarrow\quad
  \ve{x}_j\sim\N(\ve{m}_j,\ve{P})
  \ \text{independently for }j=1,\ldots,d_v.
  \label{eq:columnwise-gaussian-notation}
\end{equation}

\section{Kalman Associative Memory}
\label{sec:memory-kalman}

We revisit DeltaNet-style models through the lens of a linear--Gaussian state-space model, for
which the optimal recursive estimator is the Kalman filter~\citep{kalman1960new}. Under this
view, DeltaNet, Gated DeltaNet, and KDA emerge as fixed-gain special cases: they retain the Kalman
residual correction
but omit covariance tracking and predict the write strength directly from the current input.
Forgetting is interpreted as the transition model's prediction of how the latent memory state
persists or changes between observations. Tracking the covariance then quantifies confidence in
the predicted memory and yields a principled gain for incorporating new evidence.

\subsection{Associative Memory as a Linear-Gaussian State-Space Model}
\label{sec:memory-formulation}

To make this filtering view precise, let the memory at time $t$ be a matrix
$\ve{S}_t\in\R^{d_k\times d_v}$ defining the associative map
\begin{equation}
  \mc M_t\colon \R^{d_k}\to\R^{d_v},\qquad \mc M_t(\ve{k})=\ve{S}_t^{\tr}\ve{k} .
  \label{eq:assoc-map}
\end{equation}
The query read is $\ve{o}_t=\mc M_t(\ve{q}_t)=\ve{S}_t^{\tr}\ve{q}_t$. The streaming token provides one
supervised measurement of this map: under key $\ve{k}_t$, the memory should return value $\ve{v}_t$,
i.e.\ $\mc M_t(\ve{k}_t)\approx \ve{v}_t$. This is the shape of a state-space
model~\citep{gu2023mamba}, in which an unobserved state evolves over time, and each token reveals a
noisy linear projection of it.

\paragraph{Latent state.}
Let $\widetilde{\ve{S}}_t$ be the latent associative map the recurrent state is trying to track. The
transition describes how stored associations persist, decay, or drift before the next
measurement arrives:
\begin{equation}
  \widetilde{\ve{S}}_t
  = \ve{D}_t \widetilde{\ve{S}}_{t-1} + \ve{W}_t,
  \qquad
  \ve{W}_t\sim\N_{\mathrm{col}}(\ve{0},\ve{\Omega}_t).
  \label{eq:mem-update-dynamics}
\end{equation}
The operator $\ve{D}_t$ is the process model: it is the formal object corresponding to forgetting
under information drift, topic drift, and other non-stationarity in the stream. For DeltaNet
$\ve{D}_t=\ve{I}$; for Gated DeltaNet $\ve{D}_t=\alpha_t \ve{I}$; for KDA
$\ve{D}_t=\diag(\ve{\alpha}_t)$. The process noise $\ve{W}_t$ represents memory drift not captured
by deterministic decay, while $\ve{\Omega}_t$ quantifies uncertainty about that drift. In a
language model, this accounts for changes in the latent discourse state that cannot be inferred
from retention alone. If an incoming sentence reveals that an entity has moved from Paris to
Rome, $\ve{D}_t$ can attenuate the stale Paris binding, but only the observation can establish the
new Rome binding.
\paragraph{Observation.}
The current key reads one direction of the latent map. The observed value is
\begin{equation}
  \ve{v}_t = \widetilde{\ve{S}}_t^{\tr}\ve{k}_t + \ve{e}_t,
  \qquad
  \ve{e}_t\sim\N(0,r_t \ve{I}_{d_v}).
  \label{eq:mem-update-observation}
\end{equation}
Thus $\ve{k}_t^{\tr}$ is the observation map and $\ve{v}_t$ is the measurement. The noise term $\ve{e}_t$
captures the part of the token value that should \emph{not} be treated as a clean memory
target. In language modeling, $\ve{v}_t$ is a contextual feature produced from a noisy token embedding,
local syntax, position, layer mixing, and finite model capacity; not all of it is a stable
fact that can be stored under key $\ve{k}_t$. The covariance $r_t\ve{I}_{d_v}$ sets how reliable this value
observation is: small $r_t$ means trust the token and write strongly, while large $r_t$ means
treat much of $\ve{v}_t$ as context-specific noise.

\paragraph{Online filtering problem.}
We assume $\widetilde{\ve{S}}_0\sim\N_{\mathrm{col}}(\ve{0},\ve{I})$. Conditioned on the
input-dependent quantities $\{(\ve{D}_t,\ve{\Omega}_t,\ve{k}_t,r_t)\}_{t\geq1}$, the pairs
$\{(\ve{W}_t,\ve{e}_t)\}_{t\geq1}$ are independent across time and of
$\widetilde{\ve{S}}_0$, with $\ve{W}_t\perp\ve{e}_t$ at each step.
Given the observation history through step $t$,
$\mathcal{F}_t=\sigma\{(\ve{k}_\tau,\ve{v}_\tau):\tau\leq t\}$, we estimate the current latent
memory by its posterior mean, $\ve{S}_t=\E[\widetilde{\ve{S}}_t\mid\mathcal{F}_t]$.

\subsection{Kalman Filtering}
\label{sec:kalman-filtering}

Under the linear--Gaussian assumptions in \eqref{eq:mem-update-dynamics} and
\eqref{eq:mem-update-observation}, the Kalman filter~\citep{kalman1960new} computes this posterior mean exactly by first
predicting the latent memory and then conditioning that prediction on the current token. It carries
the filtering law
$\widetilde{\ve{S}}_t\mid\mathcal{F}_t
\sim\N_{\mathrm{col}}(\ve{S}_t,\ve{P}_t)$, where
$\ve{S}_t=\E[\widetilde{\ve{S}}_t\mid\mathcal{F}_t]$ and
$\ve{P}_t\in\R^{d_k\times d_k}$ is the common key-space covariance.

Tracking $\ve{P}_t$ is necessary because each token observes the associative map only at one key
direction $\ve{k}_t$, and the value $\ve{v}_t$ is noisy. After many tokens, some key directions are
well supported by past measurements, while others remain uncertain or have become stale
because of drift. The filter uses this uncertainty to calibrate each write, allowing larger
corrections in uncertain directions while protecting well-supported associations from noisy
measurements.

To derive the filtering recursion, suppose the previous posterior is
$\widetilde{\ve{S}}_{t-1}\mid\mathcal{F}_{t-1}
\sim\N_{\mathrm{col}}(\ve{S}_{t-1},\ve{P}_{t-1})$. The linear transition with Gaussian process
noise preserves this family, giving the predictive distribution
$\widetilde{\ve{S}}_t\mid\mathcal{F}_{t-1}
\sim\N_{\mathrm{col}}(\widehat{\ve{S}}_t,\widehat{\ve{P}}_t)$. The observation in
\eqref{eq:mem-update-observation} measures each predicted column along the same key $\ve{k}_t$ with
independent Gaussian noise, so the predicted state and $\ve{v}_t$ are jointly Gaussian.
Conditioning this joint Gaussian on $\ve{v}_t$ yields the residual correction, shared Kalman gain,
and covariance update summarized below.

\refstepcounter{definition}\label{prop:kalman-associative-memory}
\begin{tcolorbox}[
  title={Proposition~\thedefinition: Kalman Associative Memory},
  colback=blue!7,
  colbacktitle=blue!22,
  colframe=blue!45!black,
  coltitle=black,
  fonttitle=\bfseries,
  boxrule=0.4pt,
  arc=1pt,
  left=5pt,
  right=5pt,
  top=5pt,
  bottom=5pt
]
Under the linear--Gaussian model in
\eqref{eq:mem-update-dynamics}--\eqref{eq:mem-update-observation}, the Kalman filter gives the
exact posterior mean. It predicts memory and uncertainty from
$(\ve{S}_{t-1},\ve{P}_{t-1})$ using $(\ve{D}_t,\ve{\Omega}_t)$, then updates the memory with the
residual $\ve{v}_t-\widehat{\ve{S}}_t^{\tr}\ve{k}_t$ weighted by the uncertainty-dependent gain
$\ve{\kappa}_t$.
\begin{equation}
\begin{aligned}
  \text{predict:}\quad
  \widehat{\ve{S}}_t &= \ve{D}_t \ve{S}_{t-1},
  &
  \widehat{\ve{P}}_t &= \ve{D}_t \ve{P}_{t-1}\ve{D}_t^{\tr}+\ve{\Omega}_t,
  \\[3pt]
  \text{update:}\quad
  \ve{S}_t &= \widehat{\ve{S}}_t+\ve{\kappa}_t
  \big(\ve{v}_t-\widehat{\ve{S}}_t^{\tr}\ve{k}_t\big)^{\tr},
  &
  \ve{P}_t &= (\ve{I}-\ve{\kappa}_t \ve{k}_t^{\tr})\widehat{\ve{P}}_t .
  \\[3pt]
  \text{where}\quad
  \ve{\kappa}_t &= \frac{\widehat{\ve{P}}_t \ve{k}_t}
  {r_t+\ve{k}_t^{\tr}\widehat{\ve{P}}_t \ve{k}_t}.
\end{aligned}
\label{eq:kalman-optimal-update-box}
\end{equation}
\end{tcolorbox}

\noindent\textit{Proof.} See Appendix~\ref{app:proof-kalman-optimal-update}.

Thus the posterior mean is a residual write: the filter adds only the part of the observed value
that the predicted memory failed to explain. The corresponding covariance update records the
remaining uncertainty after this observation.

\begin{figure}[t]
  \vspace{-3mm}
  \centering
  \includegraphics[width=0.92\textwidth]{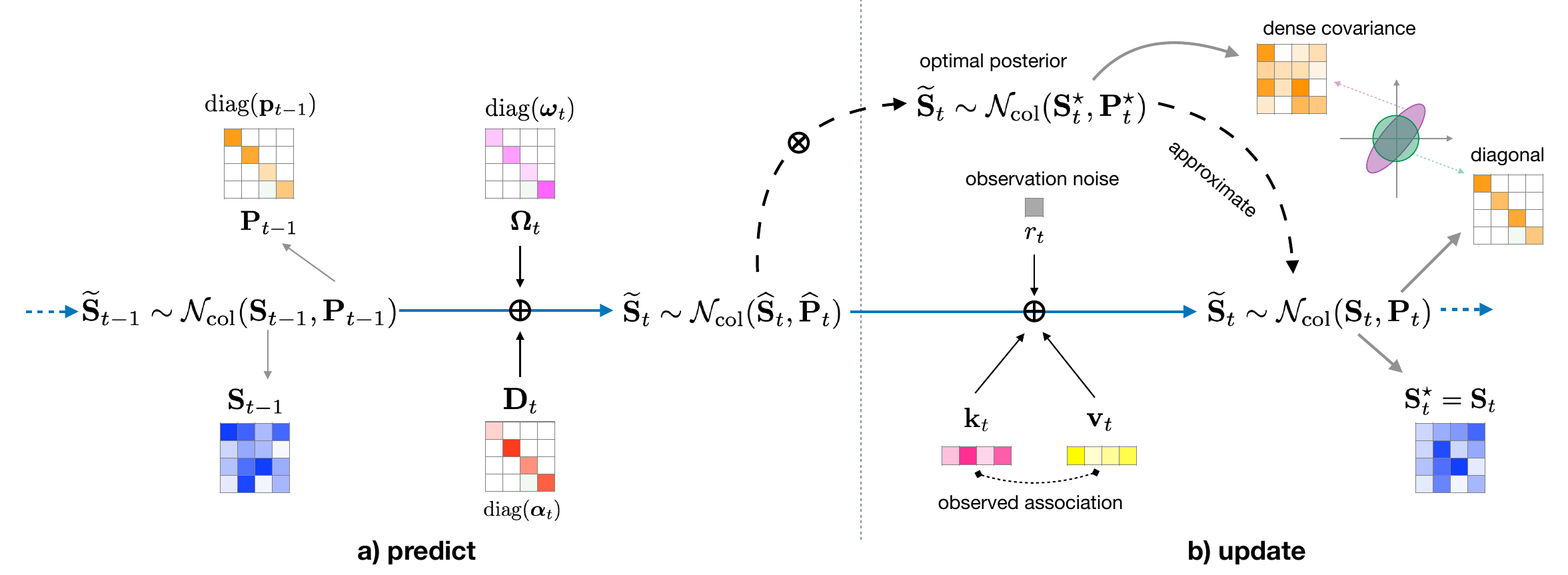}
  \vspace{-3mm}
  \caption{One Kalman Associative Memory update and its mean-field Diagonal KDN approximation.
  Prediction preserves diagonality; the exact update produces a dense posterior, which reverse-KL
  projection re-diagonalizes while preserving its mean. Here,
  $(\ve{S}_t^\star,\ve{P}_t^\star)$ is the optimal posterior from
  Proposition~\ref{prop:kalman-associative-memory}, and $(\ve{S}_t,\ve{P}_t)$ is the variational
  approximation from Proposition~\ref{prop:diag-variational-inference}.
  }
  \label{fig:kdn-variational-approximation}
  \vspace{-3mm}
\end{figure}

\subsection{Delta-rule models as fixed-gain Kalman filters}
\label{sec:delta-family-kalman}

The update above separates two modeling choices: the transition $\ve{D}_t$ predicts how the
memory changes before the token is written, and the gain $\ve{\kappa}_t$ decides the key-space
direction and strength of the residual write. Delta-rule linear attention is the
fixed-gain special case of this filter. If we discard the covariance recursion and use an
isotropic surrogate $\widehat{\ve{P}}_t\approx \widehat b_t \ve{I}$ for the predicted uncertainty, then
\begin{equation}
  \ve{\kappa}_t
  =
  \frac{\widehat{\ve{P}}_t \ve{k}_t}{r_t+\ve{k}_t^{\tr}\widehat{\ve{P}}_t \ve{k}_t}
  \approx
  \frac{\widehat b_t}{r_t+\widehat b_t\|\ve{k}_t\|_2^2}\,\ve{k}_t
  \equiv
  \beta_t \ve{k}_t .
  \label{eq:fixed-gain-approx}
\end{equation}
For normalized keys, the exact Kalman gain therefore collapses to the scalar write strength
$\beta_t$ used by the delta-rule family. Substituting $\ve{\kappa}_t=\beta_t \ve{k}_t$ into the
Kalman update gives the shared fixed-gain residual form. DeltaNet, Gated DeltaNet, and KDA then
differ through their process models. Relative to the Kalman optimal update, delta-rule models
freeze the uncertainty dynamics and replace the adaptive gain with the fixed
first-order direction
$\beta_t \ve{k}_t$.

\section{Kalman Delta Networks}
\label{sec:kla}
\vspace{-1mm}

The Kalman optimal update in \eqref{eq:kalman-optimal-update-box} is the target, but the exact
recursion is not directly compatible with fully parallel linear attention due to its Riccati
covariance recursion. This makes the gain depend on accumulated posterior uncertainty and
requires a dense $d_k\times d_k$ covariance state per head. We therefore seek a relaxation that
preserves both uncertainty-aware updates and parallelizability.

\subsection{Diagonal Kalman Delta Network}
\label{sec:diagonal-kdn}

Let $\mathcal{D}_{++}^{d_k}=\{\diag(\ve{p}):\ve{p}\in\R_{++}^{d_k}\}$.
We first restrict the covariance tracked by the filter:
\begin{equation}
  \ve{P}_{t-1}=\diag(\ve{p}_{t-1})\in\mathcal{D}_{++}^{d_k},
  \qquad
  \ve{D}_t=\diag(\ve{\alpha}_t),
  \qquad
  \ve{\Omega}_t=\diag(\ve{\omega}_t)\in\mathcal{D}_{++}^{d_k}.
  \label{eq:diag-cov-info}
\end{equation}
The covariance prediction then stays in the diagonal family:
\begin{equation}
  \widehat{\ve{P}}_t=\diag(\widehat{\ve{p}}_t),
  \qquad
  \widehat{\ve{p}}_t
  =\ve{\alpha}_t^2\odot\ve{p}_{t-1}+\ve{\omega}_t.
  \label{eq:diag-covariance-prediction}
\end{equation}
Conditioned on this diagonal predictive covariance, the Kalman gain is still exact:
\begin{equation}
  \ve{\kappa}_t =
  \frac{\widehat{\ve{p}}_t\odot \ve{k}_t}
  {r_t+\sum_i \widehat{\ve{p}}_{t,i}\ve{k}_{t,i}^2}.
  \label{eq:diag-kalman-gain}
\end{equation}
Although $\widehat{\ve{P}}_t$ is diagonal, the exact posterior covariance
$\ve{P}_t^\star=(\ve{I}-\ve{\kappa}_t\ve{k}_t^{\tr})\widehat{\ve{P}}_t$
is generally dense; thus it immediately leaves
$\mathcal{D}_{++}^{d_k}$.

\paragraph{Online variational inference.}
To keep the tracked covariance diagonal, we project the exact posterior after each token onto
the mean-field family
\begin{equation}
  \mathcal{Q}_{\mathrm{col}}
  =\left\{
    \N_{\mathrm{col}}(\ve{S},\diag(\ve{p})):
    \ve{S}\in\R^{d_k\times d_v},\ \ve{p}\in\R_{++}^{d_k}
  \right\}.
  \label{eq:diag-variational-family}
\end{equation}
Let $p(\widetilde{\ve{S}}_t\mid\ve{v}_t,\mathcal{F}_{t-1},\ve{k}_t,r_t)
=\N_{\mathrm{col}}(\ve{S}_t^\star,\ve{P}_t^\star)$ denote the exact one-step posterior given by
Proposition~\ref{prop:kalman-associative-memory}. The following proposition characterizes its
diagonal variational projection.

\begin{proposition}[Online diagonal variational update]
\label{prop:diag-variational-inference}
Under the linear--Gaussian model with diagonal predictive covariance, treating $\ve{k}_t$ and $r_t$ as known at step $t$,
the variational approximation
\begin{equation}
  q_t
  =\argmin_{q\in\mathcal{Q}_{\mathrm{col}}}
    \operatorname{KL}\!\left(
      q\,\|\,\N_{\mathrm{col}}(\ve{S}_t^\star,\ve{P}_t^\star)
    \right)
  \label{eq:diag-variational-objective}
\end{equation}
has the unique solution
\begin{equation}
  q_t
  =\N_{\mathrm{col}}\!\left(
      \ve{S}_t,\diag(\ve{p}_t)
    \right),
  \label{eq:diag-variational-posterior}
\end{equation}
where
\begin{equation}
\begin{aligned}
  \ve{S}_t
  &=\ve{S}_t^\star
    =\widehat{\ve{S}}_t
    +\ve{\kappa}_t
      \left(\ve{v}_t-\widehat{\ve{S}}_t^{\tr}\ve{k}_t\right)^{\tr},
  &
  p_{t,i}
  &=\left(\widehat p_{t,i}^{-1}+\frac{k_{t,i}^2}{r_t}\right)^{-1}.
\end{aligned}
  \label{eq:diag-variational-solution}
\end{equation}
Thus the projection preserves the exact one-step Kalman posterior mean conditional on the diagonal
predictive prior and replaces its shared dense key-space covariance with a diagonal state.
\end{proposition}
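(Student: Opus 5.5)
Because the target $\N_{\mathrm{col}}(\ve{S}_t^\star,\ve{P}_t^\star)$ and every member of $\mathcal{Q}_{\mathrm{col}}$ factorize over the $d_v$ columns with a shared key-space covariance, the KL divergence in \eqref{eq:diag-variational-objective} splits into a sum of $d_v$ Gaussian KL terms. I will write the standard Gaussian KL for $q=\N_{\mathrm{col}}(\ve{S},\diag(\ve{p}))$:
\begin{equation*}
  \operatorname{KL}
  =\tfrac12\Big[d_v\operatorname{tr}\big((\ve{P}_t^\star)^{-1}\diag(\ve{p})\big)
  +\sum_{j=1}^{d_v}(\ve{s}_j-\ve{s}_j^\star)^{\tr}(\ve{P}_t^\star)^{-1}(\ve{s}_j-\ve{s}_j^\star)
  - d_v\log\det\diag(\ve{p})+d_v\log\det\ve{P}_t^\star - d_v d_k\Big].
\end{equation*}
This objective is separable in $\ve{S}$ and $\ve{p}$. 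For this I need $\ve{P}_t^\star$ to be positive definite. That holds because $\widehat{\ve{P}}_t=\diag(\widehat{\ve{p}}_t)$ with $\widehat{\ve{p}}_t\succ0$ (from $\ve{\omega}_t\succ0$), and $r_t>0$.

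\textbf{Step 1 (mean).} Each mean term is a positive-definite quadratic form in $\ve{s}_j-\ve{s}_j^\star$. It is therefore uniquely minimized at $\ve{s}_j=\ve{s}_j^\star$, giving $\ve{S}_t=\ve{S}_t^\star$. Its explicit form is read off from Proposition~\ref{prop:kalman-associative-memory} with the gain \eqref{eq:diag-kalman-gain}.

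\textbf{Step 2 (variance).} The remaining objective is $\sum_i\big[\Lambda_{ii}p_i-\log p_i\big]$ up to constants, where $\ve{\Lambda}=(\ve{P}_t^\star)^{-1}$. Each summand is strictly convex on $p_i>0$ and coercive, with unique minimizer $p_i=1/\Lambda_{ii}$. This is the familiar fact that reverse-KL mean-field projection matches the diagonal of the precision rather than the diagonal of the covariance.

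\textbf{Step 3 (computing the precision).} This is the only real computation. I expect it to be the main obstacle, since $\ve{P}_t^\star$ is given in the gain form $(\ve{I}-\ve{\kappa}_t\ve{k}_t^{\tr})\widehat{\ve{P}}_t$, which is dense. I will convert it to information form via Sherman--Morrison. Substituting $\ve{\kappa}_t$ gives
\begin{equation*}
  \ve{P}_t^\star=\widehat{\ve{P}}_t-\frac{\widehat{\ve{P}}_t\ve{k}_t\ve{k}_t^{\tr}\widehat{\ve{P}}_t}{r_t+\ve{k}_t^{\tr}\widehat{\ve{P}}_t\ve{k}_t},
\end{equation*}
which is exactly the Sherman--Morrison expansion of $\big(\widehat{\ve{P}}_t^{-1}+r_t^{-1}\ve{k}_t\ve{k}_t^{\tr}\big)^{-1}$. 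Hence
\begin{equation*}
  \ve{\Lambda}=\diag(\widehat{\ve{p}}_t)^{-1}+r_t^{-1}\ve{k}_t\ve{k}_t^{\tr},
  \qquad
  \Lambda_{ii}=\widehat p_{t,i}^{-1}+k_{t,i}^2/r_t.
\end{equation*}
Inverting this gives the stated $p_{t,i}$.

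\textbf{Uniqueness.} It follows from the strict convexity in Steps 1 and 2, since the joint objective is a sum of strictly convex terms in disjoint variables. The closing remark of the proposition is then immediate: the projected mean equals the exact one-step Kalman mean conditional on the diagonal predictive prior.
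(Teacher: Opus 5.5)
Your proposal is correct and follows the paper's proof essentially step for step. The columnwise Gaussian KL splits into a positive-definite quadratic in $S-S_t^\star$ plus per-coordinate terms $\Lambda^\star_{ii}p_i-\log p_i$, which gives $S=S_t^\star$ and $p_i=1/\Lambda^\star_{ii}$ with $\Lambda^\star=\widehat{P}_t^{-1}+r_t^{-1}k_tk_t^{\top}$; the only difference is that you justify this precision identity via Sherman--Morrison, whereas the paper states it without derivation.
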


\noindent\textit{Proof.} See Appendix~\ref{app:proof-diag-variational-inference}.

Using this variational solution as the next token's prior gives an online mean-field filter.
Crucially, the posterior covariance remains in the diagonal family after every token:
$\ve{P}_t=\diag(\ve{p}_t)$. Defining the token's diagonal precision increment
$\ve{u}_t=(\ve{k}_t\odot\ve{k}_t)/r_t$, the covariance update is
\begin{equation}
  \widehat{\ve{p}}_t
  =\ve{\alpha}_t^2\odot\ve{p}_{t-1}+\ve{\omega}_t,
  \qquad
  \ve{p}_t
  =\left(\widehat{\ve{p}}_t^{-1}+\ve{u}_t\right)^{-1}
  =\frac{\widehat{\ve{p}}_t}
    {\ve{1}+\ve{u}_t\odot\widehat{\ve{p}}_t},
  \qquad
  \ve{p}_0=\ve{1},
  \label{eq:prefix-diag-uncertainty}
\end{equation}
where products, ratios, and inverses are coordinatewise. This approximation enables a M\"obius
associative scan of the covariance recurrence with logarithmic parallel depth, similar to that of
\citet{shaj2026kalmanlinearattention} (see Appendix~\ref{sec:chunkwise-gain}).
Figure~\ref{fig:kdn-variational-approximation}
summarizes this online predict--update--project cycle.

\begin{wrapfigure}{r}{0.48\textwidth}
  \vspace{-0.8\baselineskip}
  \vspace{-6mm}
  \centering
  \includegraphics[width=\linewidth]{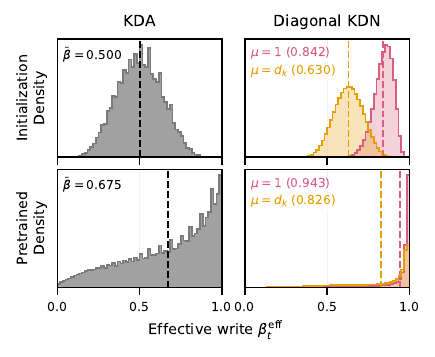}
  \vspace{-0.8\baselineskip}
  \vspace{-6mm}
  \caption{Effective-write distributions before (top) and after 50B-token pretraining (bottom).}
  \label{fig:kdn-overwrite}
\end{wrapfigure}
\paragraph{Information scaling.}
Diagonal projection discards correlations that encode confidence in combinations of key channels.
It can therefore overestimate uncertainty along an already observed key direction, causing repeated
or similar keys to trigger overly strong writes. We mitigate this by scaling the post-write precision
increment by $\mu>0$:
\begin{equation}
  \ve{u}_t
  =\frac{\mu}{r_t}\,\ve{k}_t\odot\ve{k}_t.
  \label{eq:diag-cov-update}
\end{equation}
Since $\ve{\kappa}_t$ is computed beforehand, increasing $\mu$ preserves the current write but
reduces later effective writes $\beta^{\mathrm{eff}}=\ve{k}^{\tr}\ve{\kappa}$.
Here $\mu=1$ recovers the variational update, while $\mu=d_k$ offsets the $1/d_k$ information
dilution for normalized dense keys (see Appendix~\ref{app:protection-write}).
In Figure~\ref{fig:kdn-overwrite}, increasing $\mu$ from $1$ to $d_k$ lowers the pooled mean
$\beta^{\mathrm{eff}}$.

The following proposition summarizes the resulting Diagonal Kalman Delta Network.

\refstepcounter{definition}\label{prop:kalman-linear-attention}
\begin{tcolorbox}[
  title={Proposition~\thedefinition: Diagonal Kalman Delta Network},
  breakable,
  colback=blue!7,
  colbacktitle=blue!22,
  colframe=blue!45!black,
  coltitle=black,
  fonttitle=\bfseries,
  boxrule=0.4pt,
  arc=1pt,
  left=5pt,
  right=5pt,
  top=5pt,
  bottom=5pt
]
Let the predictive covariance be restricted to $\mathcal{D}_{++}^{d_k}$ and let
$\ve{\alpha}_t$, $\ve{\omega}_t$, and $r_t$ be parameterized by
\begin{equation}
\begin{aligned}
  \ve{\alpha}_t &= \sigma(\ve{W}_{\alpha}\ve{x}_t+\ve{b}_{\alpha}),
  &
  \ve{\omega}_t &= \operatorname{softplus}(\ve{W}_{\omega}\ve{x}_t+\ve{b}_{\omega}),
  &
  r_t &= r_{\min}+\operatorname{softplus}(\ve{w}_{r}^{\tr}\ve{x}_t+b_r).
\end{aligned}
\label{eq:kla-box-parameterization}
\end{equation}
Here $\ve{x}_t$ is the token hidden state, and all weights $\ve{W}_{\ast}$ and
$\ve{w}_{\ast}$, together with their biases, are learnable parameters.
The covariance states are computed by the per-channel M\"obius scan
\begin{equation}
  \ve{M}_{t,i}
  =
  \begin{bmatrix}
    \alpha_{t,i}^2 & \omega_{t,i}\\
    u_{t,i}\alpha_{t,i}^2 & 1+u_{t,i}\omega_{t,i}
  \end{bmatrix},
  \begin{bmatrix}n_{t,i}\\ d_{t,i}\end{bmatrix}
  =
  \ve{M}_{t,i}
  \begin{bmatrix}n_{t-1,i}\\ d_{t-1,i}\end{bmatrix},
  p_{t,i}=\frac{n_{t,i}}{d_{t,i}},
  \begin{bmatrix}n_{0,i}\\ d_{0,i}\end{bmatrix}
  =
  \begin{bmatrix}1\\ 1\end{bmatrix}.
  \label{eq:kla-box-mobius}
\end{equation}
The adaptive gain at token $t$ uses the pre-write predictive covariance $\widehat{\ve{p}}_t$:
\begin{equation}
  \ve{\kappa}_t
  =
  \frac{\widehat{\ve{p}}_t\odot\ve{k}_t}
  {r_t+\sum_i \widehat{\ve{p}}_{t,i}\ve{k}_{t,i}^2}.
\label{eq:kla-box-gain}
\end{equation}
With $\ve{\kappa}_t$ available before the memory scan, the memory update is an affine scan:
\begin{equation}
\begin{aligned}
  \ve{S}_t
  &=
  (\ve{I}-\ve{\kappa}_t\ve{k}_t^{\tr})\diag(\ve{\alpha}_t)\ve{S}_{t-1}
  +\ve{\kappa}_t\ve{v}_t^{\tr},
  &
  \ve{o}_t &= \ve{S}_t^{\tr}\ve{q}_t .
\end{aligned}
\label{eq:kla-box-memory}
\end{equation}
\end{tcolorbox}

Thus the diagonal approximation preserves the two ingredients missing from fixed-gain
delta-rule models while keeping the training primitive unchanged: the memory is predicted
through $\ve{D}_t$, and the residual write is weighted by a transition-aware uncertainty
estimate.

\paragraph{Hardware-efficient chunkwise implementation.}
The diagonal approximation directly yields a two-stage parallel implementation. Its per-channel
covariance recurrence is a M\"obius map, enabling an associative chunkwise kernel that computes
all Kalman gains; once those gains are available, the memory update becomes an input-only
asymmetric delta rule handled by a compact-WY kernel. Appendix~\ref{sec:chunkwise-algorithm}
provides the derivation and implementation details.

\subsection{Isotropic Kalman Delta Network}
\label{sec:isotropic-kdn}

Instead of tracking a separate uncertainty for every key channel, we can further restrict the
covariance to the isotropic family $\ve{P}_t\approx b_t\ve{I}$. Unlike the diagonal family, the
isotropic family is not closed under a channel-wise transition, so Isotropic KDN projects both the
predicted covariance and the measurement posterior back to this family. For
$\ve{D}_t=\diag(\ve{\alpha}_t)$ and $\ve{\Omega}_t=\omega_t\ve{I}$, the trace-projected
prediction, uncertainty-derived gain, and memory update are
\begin{equation}
\begin{aligned}
  a_t
  &=\frac{1}{d_k}\sum_{i=1}^{d_k}\alpha_{t,i}^2,
  \quad
  \widehat b_t
  =a_t b_{t-1}+\omega_t,
  \quad
  \beta_t
  =\frac{\widehat b_t}{r_t+\widehat b_t\|\ve{k}_t\|_2^2},
  \\[3pt]
  \ve{S}_t
  &=(\ve{I}-\beta_t\ve{k}_t\ve{k}_t^{\tr})\ve{D}_t\ve{S}_{t-1}
    +\beta_t\ve{k}_t\ve{v}_t^{\tr}.
\end{aligned}
  \label{eq:main-iso-kdn-gain}
\end{equation}
This retains the scalar-gated form used by DeltaNet, Gated DeltaNet, and KDA, but $\beta_t$ now
depends on the transition, process uncertainty, and evidence accumulated from previous tokens
through $\widehat b_t$. It therefore replaces the independent token gate
$\sigma(\ve{w}_{\beta}^{\tr}\ve{x}_t+b_{\beta})$ with an uncertainty-derived write strength.
Appendix~\ref{app:iso-kla} derives the two isotropic projections and the posterior update.

\begin{table}[t]
\centering
\footnotesize
\setlength{\tabcolsep}{4.5pt}
\begin{tabular}{@{}l >{\columncolor{gray!10}}c >{\columncolor{gray!10}}c cccccc >{\columncolor{gray!10}}c@{}}
\toprule
& \multicolumn{2}{>{\columncolor{gray!10}}c}{Perplexity $\downarrow$}
& \multicolumn{6}{c}{Zero-shot accuracy (\%) $\uparrow$} & \cellcolor{gray!10}\\
\cmidrule(lr){2-3}\cmidrule(lr){4-9}
Model & Wiki. & LMB. & LMB. & PIQA & Hella. & Wino. & ARC-e & ARC-c & Avg. \\
\midrule
\multicolumn{10}{@{}l}{\emph{Recurrent-only, 750M parameters, 50B tokens}}\\
DeltaNet~\citeyearpar{yang2024parallelizing}         & 19.78 & 20.17 & 38.77 & 69.10 & 48.04 & 51.93 & 65.49 & 33.28 & 51.10 \\
Gated DeltaNet~\citeyearpar{yang2024gdn}             & 19.50 & 18.08 & 40.23 & 69.91 & 50.22 & 55.64 & \textbf{68.10} & 32.17 & 52.71 \\
KDA~\citeyearpar{kimiteam2025kimilinear}             & 18.85 & 15.06 & \underline{44.34} & \textbf{70.95} & 51.23 & 54.93 & 67.72 & 34.04 & 53.87 \\
Mamba-3 (SISO)~\citeyearpar{lahoti2026mamba3}  & 19.68 & 17.61 & 40.33 & \underline{70.57} & 50.87 & 53.83 & 67.51 & 34.47 & 52.93 \\
Mamba-3 (MIMO)~\citeyearpar{lahoti2026mamba3}  & 18.99 & 15.67 & 43.24 & 69.86 & \textbf{52.24} & \underline{56.75} & 67.72 & \textbf{36.52} & 54.39 \\
GDN-2~\citeyearpar{hatamizadeh2026gdn2}           & 21.20 & 17.88 & 41.18 & 70.08 & 46.90 & 54.54 & 64.27 & 31.74 & 51.45 \\
\rowcolor{blue!8}
Isotropic KDN   & \textbf{18.42} & \underline{14.68} & 44.05 & \textbf{70.95} & \underline{52.12} & 56.12 & \underline{68.01} & 35.24 & \underline{54.41} \\
\rowcolor{blue!8}
Diagonal KDN    & \underline{18.64} & \textbf{14.15} & \textbf{45.66} & \textbf{70.95} & 51.91 & \textbf{57.70} & \textbf{68.10} & \underline{35.49} & \textbf{54.97} \\
\midrule
\multicolumn{10}{@{}l}{\emph{Recurrent-only, 1.3B parameters, 100B tokens}}\\
KDA~\citeyearpar{kimiteam2025kimilinear}             & 15.40 & 10.09 & \underline{51.15} & \underline{74.16} & 60.61 & 60.54 & 73.48 & \textbf{41.72} & 60.28 \\
Mamba-3 (SISO)~\citeyearpar{lahoti2026mamba3}  & 15.94 & 11.98 & 47.45 & 73.61 & 59.39 & 57.77 & 72.64 & 38.31 & 58.20 \\
Mamba-3 (MIMO)~\citeyearpar{lahoti2026mamba3}  & 15.63 & 10.49 & 50.20 & 73.94 & \textbf{60.79} & 59.19 & \underline{73.91} & 41.04 & 59.85 \\
GDN-2~\citeyearpar{hatamizadeh2026gdn2}           & 16.15 & 11.29 & 49.45 & 72.63 & 57.80 & 59.19 & 72.73 & 39.25 & 58.51 \\
\rowcolor{blue!8}
Isotropic KDN   & \underline{15.30} & \underline{9.98} & 50.96 & 73.23 & 60.23 & \textbf{61.33} & \textbf{74.71} & \underline{41.64} & \underline{60.35} \\
\rowcolor{blue!8}
Diagonal KDN    & \textbf{15.04} & \textbf{9.75} & \textbf{51.87} & \textbf{74.21} & \underline{60.68} & \underline{60.62} & 73.70 & \underline{41.64} & \textbf{60.45} \\
\midrule
\multicolumn{10}{@{}l}{\emph{Hybrid and attention-only, 1.3B parameters, 100B tokens}}\\
Transformer~\citeyearpar{vaswani2017attention} (2K SWA)       & 16.67 & 13.10 & 48.38 & 71.22 & 56.62 & 56.75 & 68.56 & 35.84 & 56.23 \\
KDA~\citeyearpar{kimiteam2025kimilinear} $+$ SWA           & \underline{15.08} & \underline{10.81} & \underline{51.23} & 72.14 & \underline{60.33} & \textbf{61.64} & 72.77 & \underline{41.55} & \underline{59.94} \\
Mamba-3 (SISO)~\citeyearpar{lahoti2026mamba3} $+$ SWA & 15.87 & 11.38 & 50.46 & 72.80 & 59.59 & 59.27 & 72.73 & 40.10 & 59.16 \\
Mamba-3 (MIMO)~\citeyearpar{lahoti2026mamba3} $+$ SWA & 15.33 & 10.96 & 50.44 & 72.69 & 60.00 & 58.56 & \underline{72.81} & 41.21 & 59.28 \\
GDN-2~\citeyearpar{hatamizadeh2026gdn2} $+$ SWA         & 16.06 & 11.12 & 49.54 & 71.82 & 58.44 & 57.77 & 71.42 & 37.63 & 57.77 \\
\rowcolor{blue!8}
Isotropic KDN $+$ SWA & \textbf{14.98} & 10.90 & 50.49 & \underline{72.85} & 60.31 & 59.12 & 72.64 & 40.87 & 59.38 \\
\rowcolor{blue!8}
Diagonal KDN $+$ SWA  & 15.10 & \textbf{10.41} & \textbf{51.99} & \textbf{72.96} & \textbf{60.57} & \underline{60.38} & \textbf{72.98} & \textbf{41.81} & \textbf{60.11} \\
\bottomrule
\end{tabular}
\caption{Language modeling and zero-shot commonsense reasoning.}
\label{tab:pretrain-lm}
\end{table}

\Needspace{16\baselineskip}
\section{Experiments}
\label{sec:experiments}

\subsection{Experimental Setup}

\begin{wrapfigure}{R}{0.48\textwidth}
  \vspace{-21mm}
  \centering
  \includegraphics[width=0.96\linewidth]{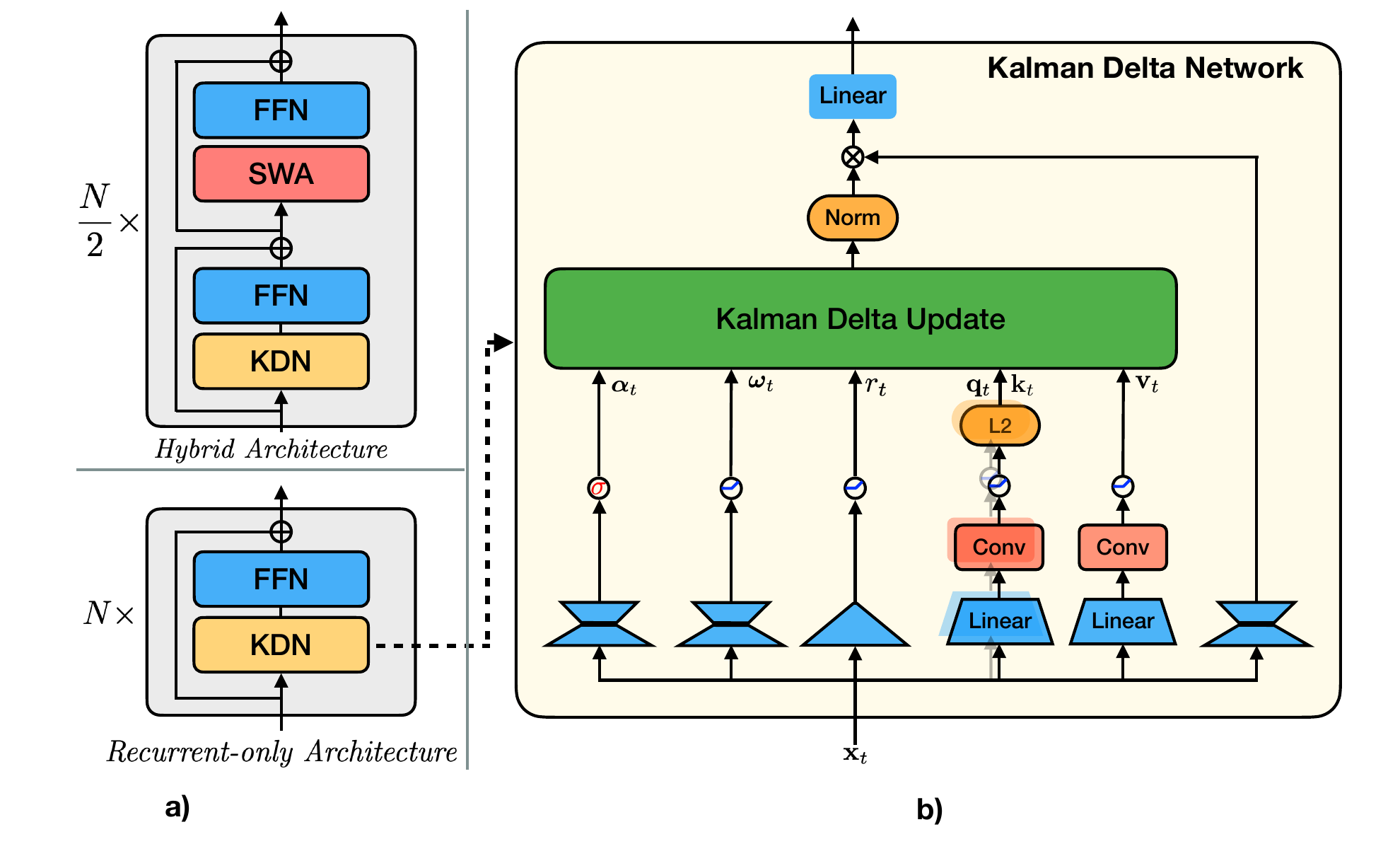}
  \vspace{-3mm}
  \caption{Kalman Delta Network architectures.}
  \label{fig:kdn-layer-architecture}
  \vspace{-4mm}
\end{wrapfigure}
\paragraph{Pretraining and evaluation datasets.}
We pretrain all models on FineWeb-Edu~\citep{penedo2024fineweb} at two scales:
$750$M parameters on $50$B tokens and $1.3$B parameters on $100$B tokens. Evaluations cover
language modeling, zero-shot commonsense reasoning, and synthetic and real-world retrieval.
Appendix~\ref{sec:experimental-setup} provides more details for the evaluation datasets.

\paragraph{Model architectures.}
KDN builds on KDA~\citep{kimiteam2025kimilinear}, replacing its $\beta_t$ gate with process
noise $\ve{\omega}_t$ and observation noise $r_t$. We use the recurrent-only and hybrid
backbones of GDN-2~\citep{hatamizadeh2026gdn2}, shown in
Figure~\ref{fig:kdn-layer-architecture}, with matched model capacity. At $1.3$B, hybrids
alternate recurrent layers with $2$K sliding-window attention (SWA).
Appendix~\ref{sec:experimental-setup} gives the dimensions and parameter- and state-matching details.

\paragraph{Baselines.}
We compare with DeltaNet~\citep{yang2024parallelizing}, Gated DeltaNet~\citep{yang2024gdn},
KDA~\citep{kimiteam2025kimilinear}, the SISO and MIMO variants of
Mamba-3~\citep{lahoti2026mamba3}, and GDN-2~\citep{hatamizadeh2026gdn2}. Our models are the Isotropic
and Diagonal KDN variants developed above. The hybrid block also includes an attention-only Transformer with a
$2$K sliding window.


\begin{table}[t]
\centering
\footnotesize
\resizebox{\linewidth}{!}{
\setlength{\tabcolsep}{3pt}
\begin{tabular}{@{}l cccc cccc ccc ccc@{}}
\toprule
& \multicolumn{4}{c}{S-NIAH-1} & \multicolumn{4}{c}{S-NIAH-2} & \multicolumn{3}{c}{S-NIAH-3} & \multicolumn{3}{c}{MK-NIAH-1} \\
\cmidrule(lr){2-5}\cmidrule(lr){6-9}\cmidrule(lr){10-12}\cmidrule(lr){13-15}
Model & 1K & 2K & 4K & 8K & 1K & 2K & 4K & 8K & 1K & 2K & 4K & 1K & 2K & 4K \\
\midrule
\multicolumn{15}{@{}l}{\emph{Recurrent-only, 750M parameters, 50B tokens}}\\
DeltaNet~\citeyearpar{yang2024parallelizing}       & \textbf{100.0} & \textbf{100.0} & \underline{99.6} & \underline{99.6} & 97.0 & 87.0 & 45.4 & 15.4 & 72.0 & 49.2 & 17.6 & 27.2 & 27.4 & 22.6 \\
Gated DeltaNet~\citeyearpar{yang2024gdn}           & \textbf{100.0} & \underline{99.8} & 99.2 & 87.8 & 93.2 & 56.2 & 27.2 & 13.8 & 71.8 & 23.8 & 6.6 & 29.8 & 25.4 & 20.2 \\
KDA~\citeyearpar{kimiteam2025kimilinear}           & \textbf{100.0} & \underline{99.8} & 99.2 & 82.6 & 96.4 & 89.8 & \underline{63.2} & \underline{26.0} & 68.8 & 43.6 & 13.6 & 32.6 & 29.8 & 24.2 \\
Mamba-3 (SISO)~\citeyearpar{lahoti2026mamba3} & \textbf{100.0} & 98.8 & 73.8 & 33.2 & 95.4 & 82.2 & 43.0 & 17.6 & 65.6 & 26.4 & 9.0 & \underline{42.4} & 27.6 & 19.6 \\
Mamba-3 (MIMO)~\citeyearpar{lahoti2026mamba3} & \textbf{100.0} & 99.0 & 79.2 & 39.6 & \underline{98.4} & \textbf{92.6} & 57.6 & 21.6 & 77.4 & 49.0 & 14.2 & \textbf{50.2} & \underline{36.0} & \underline{25.0} \\
GDN-2~\citeyearpar{hatamizadeh2026gdn2}         & \textbf{100.0} & \underline{99.8} & 90.2 & 47.2 & 95.0 & 74.6 & 36.0 & 12.6 & 64.6 & 31.0 & 9.2 & 27.8 & 25.6 & 22.0 \\
\rowcolor{blue!8}
Isotropic KDN & \textbf{100.0} & \textbf{100.0} & 98.2 & 89.4 & 96.0 & 86.0 & 47.8 & 16.2 & \underline{80.4} & \underline{58.4} & \underline{22.6} & 30.4 & 25.4 & 22.2 \\
\rowcolor{blue!8}
Diagonal KDN  & \textbf{100.0} & \textbf{100.0} & \textbf{100.0} & \textbf{100.0} & \textbf{98.6} & \underline{91.4} & \textbf{71.2} & \textbf{29.0} & \textbf{84.2} & \textbf{71.0} & \textbf{32.2} & 41.0 & \textbf{36.4} & \textbf{29.6} \\
\midrule
\multicolumn{15}{@{}l}{\emph{Recurrent-only, 1.3B parameters, 100B tokens}}\\
KDA~\citeyearpar{kimiteam2025kimilinear}           & \underline{99.8} & 99.2 & 66.6 & 30.2 & 98.0 & 95.0 & 62.2 & 23.6 & \underline{92.6} & \underline{77.2} & \underline{43.2} & 47.4 & 38.8 & 26.6 \\
Mamba-3 (SISO)~\citeyearpar{lahoti2026mamba3} & \textbf{100.0} & \underline{99.8} & 98.2 & 62.6 & \underline{98.6} & 93.6 & 62.6 & 20.6 & 69.0 & 50.2 & 16.6 & 46.6 & 38.4 & 27.2 \\
Mamba-3 (MIMO)~\citeyearpar{lahoti2026mamba3} & \underline{99.8} & 99.4 & 75.2 & 15.4 & \textbf{99.6} & \textbf{96.0} & 64.8 & 15.4 & 85.8 & 63.6 & 29.8 & \underline{51.6} & 42.4 & 30.4 \\
GDN-2~\citeyearpar{hatamizadeh2026gdn2}         & \textbf{100.0} & \textbf{100.0} & \textbf{100.0} & \textbf{99.8} & \underline{98.6} & 92.4 & 67.4 & \textbf{28.0} & 88.0 & 64.4 & 29.4 & 50.6 & \underline{44.0} & \underline{31.4} \\
\rowcolor{blue!8}
Isotropic KDN & \textbf{100.0} & \textbf{100.0} & \textbf{100.0} & \underline{94.6} & 98.4 & 91.8 & \underline{68.4} & 24.2 & 86.8 & 67.6 & 34.6 & 41.6 & 29.8 & 23.4 \\
\rowcolor{blue!8}
Diagonal KDN  & \textbf{100.0} & \textbf{100.0} & \underline{99.8} & \textbf{99.8} & \underline{98.6} & \underline{95.6} & \textbf{74.6} & \underline{25.8} & \textbf{96.4} & \textbf{88.6} & \textbf{48.6} & \textbf{62.2} & \textbf{47.4} & \textbf{33.8} \\
\midrule
\multicolumn{15}{@{}l}{\emph{Hybrid and attention-only, 1.3B parameters, 100B tokens}}\\
Transformer~\citeyearpar{vaswani2017attention} (2K SWA)
  & \textbf{100.0} & \textbf{100.0} & \textbf{52.4} & 22.6
  & \textbf{100.0} & \textbf{100.0} & \textbf{53.4} & 19.8
  & \underline{99.2} & 94.6 & 35.4
  & 76.6 & 79.4 & 45.4 \\
KDA~\citeyearpar{kimiteam2025kimilinear} $+$ SWA
  & \textbf{100.0} & 87.8 & 47.4 & 21.2
  & \underline{99.8} & \textbf{100.0} & \textbf{53.4} & 25.4
  & \underline{99.2} & 97.0 & 48.6
  & 86.0 & 83.0 & 44.2 \\
Mamba-3 (SISO)~\citeyearpar{lahoti2026mamba3} $+$ SWA
  & \underline{99.4} & 73.8 & 29.4 & 15.0
  & 99.6 & 99.2 & 52.2 & 21.8
  & 96.4 & 86.2 & 43.4
  & 74.8 & 76.0 & 39.0 \\
Mamba-3 (MIMO)~\citeyearpar{lahoti2026mamba3} $+$ SWA
  & \textbf{100.0} & \underline{99.8} & \underline{52.0} & 27.2
  & \underline{99.8} & \underline{99.8} & \underline{53.2} & 25.6
  & 97.8 & 95.8 & \underline{50.8}
  & \underline{92.8} & \underline{89.6} & 45.8 \\
GDN-2~\citeyearpar{hatamizadeh2026gdn2} $+$ SWA
  & \textbf{100.0} & \textbf{100.0} & \textbf{52.4} & \textbf{28.4}
  & \textbf{100.0} & \underline{99.8} & \textbf{53.4} & \textbf{27.6}
  & 83.0 & 67.8 & 25.2
  & 84.0 & 76.8 & 45.6 \\
\rowcolor{blue!8}
Isotropic KDN $+$ SWA
  & \textbf{100.0} & \textbf{100.0} & \textbf{52.4} & \textbf{28.4}
  & \textbf{100.0} & \textbf{100.0} & \textbf{53.4} & \textbf{27.6}
  & \textbf{99.8} & \underline{98.4} & 44.6
  & 85.4 & 89.0 & \textbf{49.8} \\
\rowcolor{blue!8}
Diagonal KDN $+$ SWA
  & \textbf{100.0} & 98.6 & 51.6 & \underline{28.0}
  & \textbf{100.0} & 98.4 & \textbf{53.4} & \underline{26.0}
  & \textbf{99.8} & \textbf{99.2} & \textbf{53.0}
  & \textbf{95.6} & \textbf{90.0} & \underline{47.8} \\
\bottomrule
\end{tabular}
}
\caption{In-context retrieval accuracy (\%) on RULER single- and multi-key needle-in-a-haystack
tasks~\citep{hsieh2024ruler}. Best per column within each block in
\textbf{bold}, second best \underline{underlined}.}
\label{tab:pretrain-niah}
\end{table}

\subsection{Experimental Results}
\label{sec:experimental-results}

\paragraph{Language modeling and commonsense reasoning.}
Table~\ref{tab:pretrain-lm} shows consistent gains from KDN across both parameter scales. Among
recurrent-only models, both Diagonal and Isotropic KDN outperform KDA and Mamba-3 on the two
perplexity benchmarks and in average accuracy at both scales, with Diagonal KDN leading in average
accuracy. In the hybrid setting, Diagonal KDN $+$ SWA achieves the highest average accuracy,
followed by KDA $+$ SWA, while every recurrent--attention hybrid outperforms the attention-only
Transformer.

\paragraph{In-context retrieval.}
Tables~\ref{tab:pretrain-niah} and~\ref{tab:real-world-retrieval} evaluate synthetic
needle-in-a-haystack and real-world retrieval, respectively.
On NIAH, Diagonal KDN
achieves the highest aggregate among recurrent-only models at both scales, with particularly strong
S-NIAH-3 and multi-key retrieval, thanks to channel-wise uncertainty that protects stored
associations under interference. On real-world tasks, Diagonal KDN
also achieves the best recurrent-only average and leads FDA, TriviaQA, and DROP. Among hybrids,
Isotropic KDN $+$ SWA has the best average, while Diagonal KDN $+$ SWA ranks second and leads FDA.
Additionally, every hybrid model substantially outperforms its recurrent-only counterpart on these
tasks.

\begin{table}[t]
\centering
\footnotesize
\setlength{\tabcolsep}{4.5pt}
\begin{tabular}{@{}l cccccc >{\columncolor{gray!10}}c@{}}
\toprule
Model & SWDE & SQuAD & FDA & TriviaQA & NQ & DROP & Avg. \\
\midrule
\multicolumn{8}{@{}l}{\emph{Recurrent 1.3B / 100B}}\\
KDA~\citeyearpar{kimiteam2025kimilinear}
  & 28.77 & \textbf{38.70} & \underline{27.97} & 61.97
  & \textbf{24.11} & 21.03 & \underline{33.76} \\
Mamba-3 (SISO)~\citeyearpar{lahoti2026mamba3}
  & 26.34 & 36.35 & 22.52 & 60.90 & 21.98 & 20.89 & 31.50 \\
Mamba-3 (MIMO)~\citeyearpar{lahoti2026mamba3}
  & 24.93 & 36.71 & 25.25 & 61.02 & 23.50 & 21.99 & 32.23 \\
GDN-2~\citeyearpar{hatamizadeh2026gdn2}
  & 29.90 & 35.64 & 21.16 & 61.49 & 23.34 & 20.99 & 32.09 \\
\rowcolor{blue!8}
Isotropic KDN
  & \textbf{33.08} & 36.71 & 22.98 & \underline{62.09}
  & 22.14 & \underline{23.38} & 33.40 \\
\rowcolor{blue!8}
Diagonal KDN
  & \underline{30.18} & \underline{38.09} & \textbf{30.61}
  & \textbf{62.56} & \underline{23.63} & \textbf{24.10}
  & \textbf{34.86} \\
\midrule
\multicolumn{8}{@{}l}{\emph{Hybrid 1.3B / 100B}}\\
Transformer~\citeyearpar{vaswani2017attention} (2K SWA)
  & 36.08 & 42.43 & 52.68 & 62.09 & 25.40 & 21.47 & 40.02 \\
KDA~\citeyearpar{kimiteam2025kimilinear} $+$ SWA
  & \underline{51.08} & 43.10 & 57.86 & 65.17
  & \underline{28.22} & \textbf{25.11} & 45.09 \\
Mamba-3 (SISO)~\citeyearpar{lahoti2026mamba3} $+$ SWA
  & 36.27 & 43.37 & 58.95 & 64.57 & 26.70 & 22.47 & 42.06 \\
Mamba-3 (MIMO)~\citeyearpar{lahoti2026mamba3} $+$ SWA
  & 42.55 & 43.33 & \underline{64.85} & \textbf{65.88}
  & \textbf{28.44} & \underline{23.53} & 44.76 \\
GDN-2~\citeyearpar{hatamizadeh2026gdn2} $+$ SWA
  & 47.80 & 42.73 & 63.22 & 62.38 & 26.10 & 23.19 & 44.24 \\
\rowcolor{blue!8}
Isotropic KDN $+$ SWA
  & \textbf{53.98} & \textbf{43.60} & 63.12 & \underline{65.52}
  & 26.89 & 22.52 & \textbf{45.94} \\
\rowcolor{blue!8}
Diagonal KDN $+$ SWA
  & 48.08 & \underline{43.50} & \textbf{66.58} & 64.93
  & 27.72 & 22.52 & \underline{45.55} \\
\bottomrule
\end{tabular}
\caption{Zero-shot accuracy (\%) on real-world retrieval tasks with inputs limited to $2$K tokens.
\emph{Avg.} is the unweighted six-task mean. Best per column within each block in \textbf{bold},
second best \underline{underlined}.}
\label{tab:real-world-retrieval}
\end{table}

\FloatBarrier

\subsection{Ablation and Runtime}
\label{sec:ablation-runtime}

\paragraph{Information scale.}
\begin{wraptable}{r}{0.37\textwidth}
  \vspace{-4mm}
  \centering
  \footnotesize
  \setlength{\tabcolsep}{4.5pt}
  \resizebox{\linewidth}{!}{%
  \begin{tabular}{@{}lccc@{}}
    \toprule
    $\mu$ & Wiki. PPL $\downarrow$ & LMB. PPL $\downarrow$ & Avg. $\uparrow$ \\
    \midrule
    $1$           & 18.69 & 15.51 & 54.12 \\
    $\sqrt{d_k}$  & 18.93 & 15.10 & 54.20 \\
    $d_k$         & 18.64 & 14.15 & \textbf{54.97} \\
    $4d_k$        & 18.97 & \textbf{13.91} & 54.92 \\
    Learned       & \textbf{18.51} & 14.68 & 54.62 \\
    \bottomrule
  \end{tabular}%
  }
  \vspace{-2mm}
  \caption{Information-scale ablation.}
  \label{tab:info-scale-ablation}
  \vspace{-0.6\baselineskip}
  \vspace{-2mm}
\end{wraptable}
Table~\ref{tab:info-scale-ablation} ablates the information scale using fixed values
$\mu\in\{1,\sqrt{d_k},d_k,4d_k\}$ and a learned scale initialized at $d_k$. Increasing the fixed scale
steadily improves LAMBADA perplexity, with $4d_k$ performing best, whereas WikiText
perplexity and average accuracy peak at $d_k$. The learned scale performs best on WikiText but
trails the fixed $d_k$ setting on LAMBADA and average accuracy, indicating that the overall effect
is small and metric-dependent.

\FloatBarrier

\paragraph{Throughput.}
\begin{wrapfigure}{r}{0.43\textwidth}
  \vspace{-6mm}
  \centering
  \includegraphics[width=\linewidth]{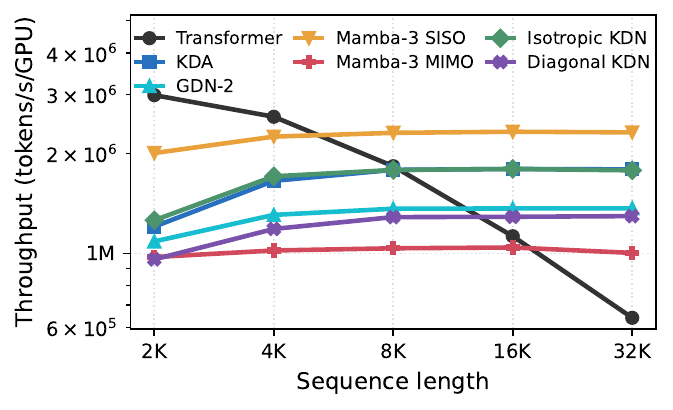}
  \vspace{-8mm}
  \caption{Mixer-layer throughput scaling.}
  \label{fig:mixer-throughput}
  \vspace{-0.6\baselineskip}
\end{wrapfigure}
Figure~\ref{fig:mixer-throughput} benchmarks the mixer layers' throughput using
the state-matched 750M configuration on one H200 ($B=4$, $d_{\mathrm{model}}=2048$; median
over 30 runs). Transformer uses full causal attention, and
lengths beyond the 4K training context are scaling probes. Isotropic KDN closely tracks KDA across
sequence lengths. Diagonal KDN incurs additional cost from the channel-wise uncertainty scan but
remains close to GDN-2 and retains linear scaling.
Full attention degrades sharply as sequence length grows.

\FloatBarrier

\vspace{-1mm}
\section{Related Work}
\vspace{-1mm}
\label{sec:related-work}

\paragraph{Fixed-memory attention.}
Standard self-attention~\citep{vaswani2017attention}, including its RoPE-based variants~\citep{su2024roformer}, retains token-level history and computes quadratic pairwise interactions. Practical systems reduce these costs through sparse attention~\citep{liu2025deepseek} or bounded KV-cache retention at inference time~\citep{xiao2024efficient,bui2026cache,bui2026make}. Efficient native alternatives include low-rank
projection in Linformer~\citep{wang2020linformer} and recurrent linear attention, which compresses
history into a fixed-size fast-weight state for constant-memory decoding and scan-parallel
training~\citep{katharopoulos2020transformers}. Gated and corrective recurrent designs include GLA,
DeltaNet, Gated DeltaNet, Comba, RWKV-7, and KDA
\citep{yang2023gated,schlag2021linear,yang2024gdn,hu2025comba,
peng2025rwkv7,kimiteam2025kimilinear}; FoX, DeltaFormer, and PaTH-FoX carry related forgetting,
associative correction, and multiplicative positional encoding to exponential-kernel attention
\citep{lin2025forgetting,zhong2025understanding,yang2025path}. GDN-2 further decouples erase and
write~\citep{hatamizadeh2026gdn2}. Practical Preconditioned DeltaNet variants~\citep{tumma2026preconditioned}
combine diagonal curvature with token-predicted gains, whereas KDN derives gains from propagated
uncertainty and observation noise.

\paragraph{State-space sequence mixers and Bayesian memory.}
Structured SSMs provide a complementary approach to fixed-memory sequence mixing: S4 makes long-range continuous-time dynamics
tractable via structured convolutions, while S5 recasts them as a scan-parallel MIMO
recurrence~\citep{gu2022efficiently,smith2023simplified},
Mamba adds input selectivity~\citep{gu2023mamba}, Mamba-2 derives attention--SSM duality and SSD
\citep{dao2024transformers}, and Mamba-3 introduces exponential--trapezoidal discretization, complex
dynamics, and SISO/MIMO variants~\citep{lahoti2026mamba3}. KLA~\citep{shaj2026kalmanlinearattention}
augments Mamba-style sequence mixing with a scan-parallel Kalman filter via belief factorization
and time-invariant OU dynamics. Unlike Mamba's deterministic, control-driven additive updates, KDN treats token values
as noisy observations of a stochastic key--value map and weights each innovation by propagated
covariance. Unlike KLA's factorized belief, KDN's exact posterior has dense key-space covariance
shared across value channels.
Concurrently, \citet{dowling2026memory} derive Bayesian Layers with dense covariance propagation.
However, their implementation requires quadratic covariance storage and sequential chunk processing,
whereas KDN uses scan-compatible isotropic and diagonal approximations.

\vspace{-1mm}
\section{Conclusion and Future Work}
\vspace{-1mm}
\label{sec:discussion-conclusion}

We introduced two new recurrent mixers, Isotropic and Diagonal KDNs, that generalize existing
delta-rule mixers by capturing uncertainty in stored associations to calibrate residual writes
while preserving parallel training. In parameter-matched recurrent-only pretraining at 750M and
1.3B parameters, both variants achieve lower WikiText and LAMBADA perplexity and higher mean
six-task zero-shot accuracy than all evaluated recurrent baselines. Diagonal KDN also achieves
the highest aggregate RULER score at both scales.

These models are steps toward, rather than full realizations of, Kalman Associative Memory. Exact
filtering maintains a dense key-space covariance and a state-dependent Riccati update, whereas the
isotropic and diagonal approximations compress this covariance to preserve parallel training. A future direction is to extend diagonal decay with the damped rotations used by Mamba-3
\citep{lahoti2026mamba3}, allowing stored associations to rotate as well as decay. Making this richer
transition and covariance update scan-efficient remains open.

\subsection*{AI use statement}

We used AI tools to polish the writing and assist with implementing and optimizing Triton kernels
for Isotropic KDN and Diagonal KDN. We implemented and verified recurrent Python reference
versions of both models, which served as correctness references and guardrails for AI agents
during kernel implementation and performance optimization. We take responsibility for the final
content of this work, including all AI-assisted text and code.

\subsection*{Reproducibility statement}

We specify the modeling assumptions and update equations in
Sections~\ref{sec:memory-kalman}--\ref{sec:kla}, with proofs and further derivations in
Appendices~\ref{app:proofs}, \ref{app:protection-write}, and~\ref{app:iso-kla}.
Appendix~\ref{sec:experimental-setup} documents the datasets, model configurations,
parameter- and state-matching procedures, training hyperparameters, and evaluation protocols.
Appendix~\ref{sec:chunkwise-algorithm} details the chunkwise gain and memory computations.
We also implemented and verified recurrent Python reference versions of both KDN variants to
serve as correctness references for Triton kernel development.

\bibliography{main}
\bibliographystyle{plainnat}

\appendix
\clearpage

\section{Additional Experiments}
\label{app:additional-experiments}

\subsection{Detailed Experimental Setup}
\label{sec:experimental-setup}

\paragraph{Pretraining and evaluation datasets.}
We pretrain all models from scratch on FineWeb-Edu~\citep{penedo2024fineweb} at two scales:
$750$M parameters on $50$B tokens and $1.3$B parameters on $100$B tokens.
\begin{itemize}
\item \textbf{Language Modeling.} We evaluate language
modeling with perplexity on WikiText~\citep{merity2017pointer} and
LAMBADA~\citep{paperno2016lambada}.
\item \textbf{Zero-shot reasoning.} We evaluate zero-shot reasoning with
LAMBADA~\citep{paperno2016lambada}, PIQA~\citep{bisk2020piqa},
HellaSwag~\citep{zellers2019hellaswag}, WinoGrande~\citep{sakaguchi2020winogrande}, and
ARC-Easy and ARC-Challenge~\citep{clark2018arc}.
\item \textbf{In-context retrieval.} We evaluate
in-context retrieval on the single- and multi-key needle-in-a-haystack tasks from
RULER~\citep{hsieh2024ruler}. We use $25$-word context increments and deterministic random essay
windows rather than $500$-word increments and fixed prefixes; every model receives the same $500$
samples per cell, with matched keys, values, and needle depths.
\item \textbf{Real-world retrieval.} Following Just Read Twice (JRT)~\citep{arora2024justreadtwice},
we additionally evaluate real-world retrieval on six cloze-formatted tasks: SWDE~\citep{lockard2019openceres},
SQuAD~\citep{rajpurkar2018squad}, FDA~\citep{arora2023evaporate},
TriviaQA~\citep{joshi2017triviaqa}, Natural Questions~\citep{kwiatkowski2019natural}, and
DROP~\citep{dua2019drop}.
\end{itemize}

\paragraph{Model architectures.}
The KDN mixer builds on KDA~\citep{kimiteam2025kimilinear}, replacing its $\beta_t$ gate with
process noise $\ve{\omega}_t$ and observation noise $r_t$. To isolate the mixer, we adopt the
recurrent-only and hybrid layer and model architectures of GDN-2~\citep{hatamizadeh2026gdn2}.
This controlled choice might not be optimal; determining the best recurrent-to-attention layer
ratio requires a dedicated ablation, as performed for KDA in Kimi
Linear~\citep{kimiteam2025kimilinear}.
Figure~\ref{fig:kdn-layer-architecture} summarizes the recurrent-only and hybrid designs. All
recurrent-only models use the same backbone, with feed-forward widths adjusted to match non-embedding
parameter counts within $0.03\%$. The $750$M
and $1.3$B backbones use 16 and 18 layers with $d_{\mathrm{model}}=2048$ and $2304$, respectively,
corresponding to KDN mixers with 16 and 18 recurrent heads of dimension $128$.

The Mamba-3 baselines are also
configured to match the KDNs' total recurrent-state size at both scales. At $1.3$B/$100$B, the
hybrid architecture alternates nine recurrent mixer layers with nine sliding-window-attention
(SWA) layers using a $2$K window.

\paragraph{Implementation details.}
We use the same training recipe as GDN-2~\citep{hatamizadeh2026gdn2} for all models: AdamW with a
peak learning rate of $4\times10^{-4}$, $\beta=(0.9,0.95)$, weight decay $0.1$, gradient clipping
at $1.0$, and a cosine schedule with a $1\%$ token-budget warm-up ($0.5$B/$1$B tokens at
the two scales). All runs use a global
batch of $0.5$M tokens, a sequence length of $4$K, and the same data seed within each comparison.
We set $r_{\min}=0.01$ and initialize the process noise isotropically by setting
$\ve{W}_{\omega}=\ve{0}$ when training Diagonal KDN. We also learn a scalar initial covariance
$c_{0,h}>0$ for each head $h$, so that
$\widetilde{\ve{S}}_0^{(h)}\sim\N_{\mathrm{col}}(\ve{0},c_{0,h}\ve{I}_{d_k})$.
Throughout the main experiments, both Isotropic and Diagonal KDN use the fixed information scale
$\mu=d_k$.

\subsection{Additional Experiment Results}
\label{app:additional-experiment-results}

\paragraph{Length extrapolation.}
Following Mamba-3~\citep{lahoti2026mamba3}, we evaluate the recurrent-only $1.3$B models from
Table~\ref{tab:pretrain-lm} on WikiText while increasing the context limit from $1$K to $32$K,
eight times the $4$K training sequence length. As shown in
Figure~\ref{fig:length-extrapolation-recurrent}, every model improves on its $4$K result at each
longer context limit, with most gains saturating by $16$K and no abrupt extrapolation failure.
Diagonal KDN achieves the lowest word perplexity at every evaluated length, improving from $15.04$
at $4$K to $14.67$ at $32$K; Isotropic KDN is consistently second among the recurrent models.

\begin{figure}[H]
  \centering
  \includegraphics[width=0.82\textwidth]{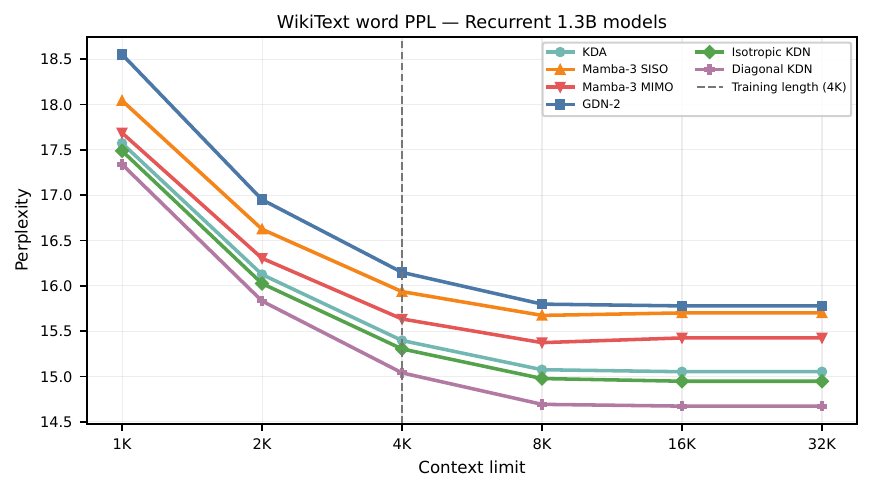}
  \vspace{-2mm}
  \caption{Length extrapolation of recurrent-only $1.3$B models trained with $4$K sequences.
  We report WikiText word perplexity (lower is better) as the context limit increases from $1$K to
  $32$K; the dashed line marks the training sequence length.}
  \label{fig:length-extrapolation-recurrent}
\end{figure}

\FloatBarrier

 
\paragraph{Observation and process noise.}

\begin{wraptable}{r}{0.49\textwidth}
  \centering
  \footnotesize
  \setlength{\tabcolsep}{2.5pt}
  \begin{tabular}{@{}lccc@{}}
    \toprule
    Setting ($\mu=1$) & Wiki. PPL $\downarrow$ & LMB. PPL $\downarrow$ & Avg. $\uparrow$ \\
    \midrule
    Learnable $r_t,\ve{\omega}_t$ & 18.69 & \textbf{15.51} & \textbf{54.12} \\
    Fix $r_t=1$ & \textbf{18.46} & 15.72 & 53.77 \\
    Fix both at $1$ & 18.64 & 15.84 & 53.87 \\
    \bottomrule
  \end{tabular}
  \caption{Observation- and process-noise ablation.}
  \label{tab:fixed-noise-ablation}
  \vspace{-0.6\baselineskip}
\end{wraptable}
Table~\ref{tab:fixed-noise-ablation} ablates the observation and process noise at $\mu=1$ by comparing learned $r_t$ and
$\ve{\omega}_t$ with variants that first fix $r_t=1$ and then fix both terms to one.
Fixing $r_t$ improves WikiText perplexity but slightly degrades LAMBADA perplexity and average accuracy; fixing
both terms yields similarly mixed results. Overall, neither constraint consistently improves
short-context quality.

\WFclear
\FloatBarrier

\section{Attention and State-Space Background}
\label{app:attention-ssm-background}

\paragraph{Self-attention.}
Causal softmax attention stores every preceding key--value pair and retrieves a query-dependent
weighted combination~\citep{vaswani2017attention},
\begin{equation}
  \ve{o}_t
  =\sum_{i=1}^{t}
    \frac{\exp(\ve{q}_t^{\tr}\ve{k}_i/\sqrt{d_k})}
    {\sum_{j=1}^{t}\exp(\ve{q}_t^{\tr}\ve{k}_j/\sqrt{d_k})}\,\ve{v}_i.
  \label{eq:softmax-attention}
\end{equation}
Retaining the complete token history provides direct access to past content, but the autoregressive
cache grows linearly with $t$, and processing a full sequence requires quadratic query--key
interactions.

\paragraph{Mamba state-space models.}
The Mamba family uses the selective linear state-space recurrence
\begin{equation}
  \ve{h}_t=\ve{A}_t\ve{h}_{t-1}+\ve{B}_t\ve{x}_t,
  \qquad
  \ve{o}_t=\ve{C}_t^{\tr}\ve{h}_t,
  \label{eq:mamba-ssm}
\end{equation}
where $\ve{A}_t$, $\ve{B}_t$, and $\ve{C}_t$ control retention, writing, and reading. Mamba-1 makes
these operations input-dependent~\citep{gu2023mamba}, while Mamba-2 connects the resulting
selective recurrence to structured masked attention through state-space
duality~\citep{dao2024transformers}. Mamba-3 adds exponential--trapezoidal discretization,
complex-valued dynamics, and SISO and MIMO variants~\citep{lahoti2026mamba3}. Unlike the delta-rule
family in \eqref{eq:unified-delta-rule}, Mamba adds the control $\ve{B}_t\ve{x}_t$ directly rather
than applying a key-conditioned residual correction.

\section{Theoretical Proofs}
\label{app:proofs}

\subsection{Proof of the Kalman optimal update}
\label{app:proof-kalman-optimal-update}

\begin{proof}[Proof of Proposition~\ref{prop:kalman-associative-memory} (Kalman Associative Memory)]

Let the latent associative-memory state be
\[
  \widetilde{\ve{S}}_t \in \R^{d_k\times d_v},
  \qquad
  \mathcal{M}_t(\ve{k})=\widetilde{\ve{S}}_t^{\tr}\ve{k}.
\]
The linear--Gaussian state-space model is
\begin{align}
  \widetilde{\ve{S}}_t
  &= \ve{D}_t\widetilde{\ve{S}}_{t-1}+\ve{W}_t,
  &
  \ve{W}_t&\sim\N_{\mathrm{col}}(\ve{0},\ve{\Omega}_t), \\
  \ve{v}_t
  &= \widetilde{\ve{S}}_t^{\tr}\ve{k}_t+\ve{e}_t,
  &
  \ve{e}_t&\sim\mathcal{N}(0,r_t\ve{I}_{d_v}).
\end{align}
The current model quantities $\ve{D}_t$, $\ve{\Omega}_t$, $r_t$, and $\ve{k}_t$ are treated as
known when the gain is selected.
Under these Gaussian assumptions, the filtering posterior at step $t-1$ is
\begin{equation}
  p(\widetilde{\ve{S}}_{t-1}\mid \mc F_{t-1})
  =
  \N_{\mathrm{col}}(\ve{S}_{t-1},\ve{P}_{t-1}),
  \label{eq:previous-filtering-posterior}
\end{equation}
where $\ve{S}_{t-1}$ is the posterior mean, i.e.\ the point estimate of the memory after
processing the first $t-1$ key--value pairs, and $\ve{P}_{t-1}$ is its key-space uncertainty.

Before observing token $t$, the filter first predicts the next latent memory by marginalizing
over the previous posterior:
\begin{equation}
  p(\widetilde{\ve{S}}_t\mid \mc F_{t-1})
  =
  \int
  p(\widetilde{\ve{S}}_t\mid \widetilde{\ve{S}}_{t-1})
  p(\widetilde{\ve{S}}_{t-1}\mid \mc F_{t-1})
  \,d\widetilde{\ve{S}}_{t-1} .
  \label{eq:prediction-integral}
\end{equation}
Because both factors are linear--Gaussian, the predicted distribution is Gaussian,
\begin{equation}
  p(\widetilde{\ve{S}}_t\mid \mc F_{t-1})
  =
  \N_{\mathrm{col}}(\widehat{\ve{S}}_t,\widehat{\ve{P}}_t).
  \label{eq:predicted-prior}
\end{equation}
Evaluating this Gaussian integral gives the exact Kalman prediction step:
\begin{equation}
  \widehat{\ve{S}}_t=\ve{D}_t\ve{S}_{t-1},
  \qquad
  \widehat{\ve{P}}_t
  =
  \ve{D}_t\ve{P}_{t-1}\ve{D}_t^{\tr}+\ve{\Omega}_t.
  \label{eq:kam-predict}
\end{equation}
At step $t$, we observe the key--value pair $(\ve{k}_t,\ve{v}_t)$, where
$\ve{k}_t^{\tr}$ is the known observation map.
Reading the current predicted memory at $\ve{k}_t$ gives the expected value
$\widehat{\ve{v}}_t=\widehat{\ve{S}}_t^{\tr}\ve{k}_t$. Since the actual observation is
$\ve{v}_t$, the innovation is its difference from this memory prediction:
\begin{equation}
  \ve{\delta}_t=\ve{v}_t-\widehat{\ve{v}}_t.
  \label{eq:kam-innovation}
\end{equation}

The columnwise predictive state and isotropic observation noise make the pairs
$(\widetilde{\ve{s}}_{t,j},v_{t,j})$ conditionally independent across $j$. Conditioned on
$(\mc F_{t-1},\ve{k}_t)$, each pair has joint distribution
\begin{equation}
  \begin{bmatrix}
    \widetilde{\ve{s}}_{t,j} \\ v_{t,j}
  \end{bmatrix}
  \sim
  \N\!\left(
  \begin{bmatrix}
    \widehat{\ve{s}}_{t,j} \\ \ve{k}_t^{\tr}\widehat{\ve{s}}_{t,j}
  \end{bmatrix},
  \begin{bmatrix}
    \widehat{\ve{P}}_t & \widehat{\ve{P}}_t\ve{k}_t \\
    \ve{k}_t^{\tr}\widehat{\ve{P}}_t & \gamma_t
  \end{bmatrix}
  \right),
  \qquad
  \gamma_t:=r_t+\ve{k}_t^{\tr}\widehat{\ve{P}}_t\ve{k}_t.
  \label{eq:kam-joint-column}
\end{equation}
Applying the Gaussian conditioning formula gives
\begin{equation}
\begin{aligned}
  \widetilde{\ve{s}}_{t,j}\mid\mc F_t
  &\sim
  \N\!\left(
    \widehat{\ve{s}}_{t,j}+\ve{\kappa}_t\delta_{t,j},
    \ve{P}_t
  \right), \\
  \ve{\kappa}_t
  &=\frac{\widehat{\ve{P}}_t\ve{k}_t}{\gamma_t},
  &
  \ve{P}_t
  &=\widehat{\ve{P}}_t
    -\frac{\widehat{\ve{P}}_t\ve{k}_t\ve{k}_t^{\tr}\widehat{\ve{P}}_t}
    {\gamma_t}.
\end{aligned}
  \label{eq:kam-column-conditioning}
\end{equation}
The posterior remains factorized, with a common gain and covariance across value coordinates.
Stacking the columnwise posteriors therefore yields
\begin{equation}
  \widetilde{\ve{S}}_t\mid\mc F_t
  \sim
  \N_{\mathrm{col}}\!\left(
    \widehat{\ve{S}}_t+\ve{\kappa}_t\ve{\delta}_t^{\tr},
    \ve{P}_t
  \right),
  \qquad
  \ve{P}_t
  =(\ve{I}-\ve{\kappa}_t\ve{k}_t^{\tr})\widehat{\ve{P}}_t.
  \label{eq:kam-stacked-posterior}
\end{equation}
This directly establishes the posterior mean and covariance.

It remains to verify the equivalent minimum-MSE characterization. For any candidate gain
$\ve{g}\in\mathbb{R}^{d_k}$, define
$\ve{S}_t(\ve{g})=\widehat{\ve{S}}_t+\ve{g}\ve{\delta}_t^{\tr}$ and
\begin{equation}
  J_t(\ve{g})
  :=\E\!\left[
    \left.
    \|\widetilde{\ve{S}}_t-\ve{S}_t(\ve{g})\|_F^2
    \,\right|\,\mc F_{t-1},\ve{k}_t
  \right].
  \label{eq:kam-mse-objective}
\end{equation}
Using the predictive covariance and independence of the observation noise, direct expansion and
completion of the square give
\begin{equation}
\begin{aligned}
  J_t(\ve{g})
  &=d_v\Big[
    \Trace(\widehat{\ve{P}}_t)
    -2\ve{g}^{\tr}\widehat{\ve{P}}_t\ve{k}_t
    +\gamma_t\ve{g}^{\tr}\ve{g}
  \Big] \\
  &=d_v\Big[
    \Trace(\ve{P}_t)
    +\gamma_t\|\ve{g}-\ve{\kappa}_t\|_2^2
  \Big].
\end{aligned}
  \label{eq:kam-mse-completed-square}
\end{equation}
Since $\gamma_t>0$, this objective has the unique minimizer
$\ve{g}=\ve{\kappa}_t$, establishing the minimum-MSE characterization.
Therefore the exact Kalman Associative Memory update is
\begin{equation}
\begin{aligned}
  \widehat{\ve{S}}_t
  &=\ve{D}_t\ve{S}_{t-1},
  &
  \widehat{\ve{P}}_t
  &=\ve{D}_t\ve{P}_{t-1}\ve{D}_t^{\tr}+\ve{\Omega}_t,
  \\
  \ve{\kappa}_t
  &=
  \frac{\widehat{\ve{P}}_t\ve{k}_t}
  {r_t+\ve{k}_t^{\tr}\widehat{\ve{P}}_t\ve{k}_t},
  &
  \ve{S}_t
  &=
  \widehat{\ve{S}}_t
  +
  \ve{\kappa}_t
  \big(\ve{v}_t-\widehat{\ve{S}}_t^{\tr}\ve{k}_t\big)^{\tr},
  \\
  \ve{P}_t
  &=
  (\ve{I}-\ve{\kappa}_t\ve{k}_t^{\tr})\widehat{\ve{P}}_t.
\end{aligned}
\label{eq:kam-optimal-update}
\end{equation}
\end{proof}

\subsection{Proof of the online diagonal variational update}
\label{app:proof-diag-variational-inference}

\begin{proof}[Proof of Proposition~\ref{prop:diag-variational-inference}
  (Online Diagonal Variational Update)]
Let $\ve{\Lambda}_t^\star=(\ve{P}_t^\star)^{-1}$. For
$q=\N_{\mathrm{col}}(\ve{S},\diag(\ve{p}))$, the columnwise factorization and the Gaussian KL formula give
\begin{equation}
\begin{aligned}
  &\operatorname{KL}\!\left(
    q\,\|\,\N_{\mathrm{col}}(\ve{S}_t^\star,\ve{P}_t^\star)
  \right) \\
  &\quad
    =\frac12\Trace\!\left[
      (\ve{S}-\ve{S}_t^\star)^{\tr}
      \ve{\Lambda}_t^\star
      (\ve{S}-\ve{S}_t^\star)
    \right]
    +\frac{d_v}{2}\sum_i\left[
      (\ve{\Lambda}_t^\star)_{ii}p_i
      -\log p_i
    \right]+C,
  \label{eq:proof-diag-vi-objective}
\end{aligned}
\end{equation}
where $C=\frac{d_v}{2}(\log\det\ve{P}_t^\star-d_k)$ is independent of $\ve{S}$ and $\ve{p}$.
Since $\ve{\Lambda}_t^\star\succ\ve{0}$, the mean term is uniquely minimized at
$\ve{S}=\ve{S}_t^\star$. For each $p_i>0$,
\begin{equation}
  \frac{\partial\operatorname{KL}}{\partial p_i}
  =\frac{d_v}{2}\left(
    (\ve{\Lambda}_t^\star)_{ii}-\frac{1}{p_i}
  \right),
  \qquad
  \frac{\partial^2\operatorname{KL}}{\partial p_i^2}
  =\frac{d_v}{2p_i^2}>0.
  \label{eq:proof-diag-vi-derivatives}
\end{equation}
Thus the unique minimizing variance is
$p_i=1/(\ve{\Lambda}_t^\star)_{ii}$. Finally, the exact posterior precision satisfies
\begin{equation}
  \ve{\Lambda}_t^\star
  =\widehat{\ve{P}}_t^{-1}
   +\frac{1}{r_t}\ve{k}_t\ve{k}_t^{\tr}.
  \label{eq:proof-diag-vi-exact-precision}
\end{equation}
Because $\widehat{\ve{P}}_t=\diag(\widehat{\ve{p}}_t)$,
\begin{equation}
  (\ve{\Lambda}_t^\star)_{ii}
  =\widehat p_{t,i}^{-1}+\frac{k_{t,i}^2}{r_t},
  \qquad
  p_{t,i}
  =\left(\widehat p_{t,i}^{-1}+\frac{k_{t,i}^2}{r_t}\right)^{-1}.
  \label{eq:proof-diag-vi-variance}
\end{equation}
Together with the exact Kalman posterior mean $\ve{S}_t^\star$ from
Proposition~\ref{prop:kalman-associative-memory}, this proves
\eqref{eq:diag-variational-solution}.
\end{proof}

\section{Overwrite and Protection--Write Decoupling}
\label{app:protection-write}

\paragraph{Empirical symptom.}
For any delta-style residual update, define the effective write strength
\begin{equation}
  \beta_t^{\mathrm{eff}}
  =\ve{k}_t^{\tr}\ve{\kappa}_t.
  \label{eq:appendix-effective-write}
\end{equation}
Reading the updated memory at the same key shows its operational meaning:
\begin{equation}
  \ve{v}_t-\ve{S}_t^{\tr}\ve{k}_t
  =\left(1-\beta_t^{\mathrm{eff}}\right)
   \left(\ve{v}_t-\widehat{\ve{S}}_t^{\tr}\ve{k}_t\right).
  \label{eq:appendix-effective-write-residual}
\end{equation}
Thus $\beta_t^{\mathrm{eff}}$ is the fraction of the current residual removed at its own key;
values near one correspond to an almost complete same-key rewrite. We call an update an
\emph{overwrite} when $\beta_t^{\mathrm{eff}}$ remains large despite strong prior evidence along
$\ve{k}_t$, causing a new observation to replace rather than cautiously revise the stored
association.

To analyze the write strength induced by KDN, we record $\beta_t^{\mathrm{eff}}$ across layers,
heads, and tokens on a fixed FineWeb-Edu stream for parameter-matched 750M KDA and Diagonal KDN
models, both at initialization and after 50B training tokens. The resulting distributions are shown
in Figure~\ref{fig:kdn-overwrite}. At $\mu=1$, Diagonal KDN concentrates its effective writes near
one, indicating frequent overwrite. Increasing $\mu$ from $1$ to $d_k$ shifts the distribution
toward lower effective write strength.

\paragraph{Information scale as future protection.}
To understand overwrite, consider a normalized key with equal energy on $m$ active channels and
isotropic predictive variance on that support: $k_i^2=1/m$ and $\widehat p_i=p$. Along this key,
the exact posterior and its unscaled diagonal projection retain variances
\begin{equation}
  V_{\mathrm{exact}}
  :=\ve{k}^{\tr}\ve{P}^{\star}\ve{k}
  =\frac{rp}{r+p},
  \qquad
  V_{\mathrm{proj}}
  :=\ve{k}^{\tr}\diag(\ve{p})\ve{k}
  =\frac{mrp}{mr+p}.
  \label{eq:appendix-balanced-key-mismatch}
\end{equation}
For $m>1$, $V_{\mathrm{proj}}>V_{\mathrm{exact}}$. Consider an immediate repeat of the same key
with no intervening covariance evolution, so that the retained variance $V$ becomes the predictive
variance for the next write. Its effective strength is
\begin{equation}
  \beta_{t+1}^{\mathrm{eff}}(V)
  =\frac{V}{r_{t+1}+V}.
  \label{eq:appendix-variance-write-link}
\end{equation}
Because this fraction increases with $V$, the diagonal projection assigns a stronger correction
to the repeated key than the exact posterior.

We introduce an information scale $\mu$ to reduce this retained uncertainty. For the causal
analysis, let it vary by token as $\mu_t$; the model uses a single constant $\mu$:
\begin{equation}
  p_{t,i}(\mu_t)
  =\left(\widehat p_{t,i}^{-1}
    +\mu_t\frac{k_{t,i}^2}{r_t}\right)^{-1}.
  \label{eq:appendix-protection-scale-update}
\end{equation}
For the balanced key above, the scaled projection retains
\begin{equation}
  V_{\mathrm{proj}}(\mu)=\frac{mrp}{mr+\mu p}.
  \label{eq:appendix-balanced-key-protection}
\end{equation}
Thus $\mu=m$ matches the exact one-step directional variance; for an equal-energy key with full
support, this gives $\mu=d_k$.

The gain $\ve{\kappa}_t$ is computed from the predictive state $\widehat{\ve{p}}_t$ before the
scaled uncertainty update, so $\mu_t$ does not change the current gain or memory estimate:
\begin{equation}
  \frac{\partial\ve{\kappa}_t}{\partial\mu_t}=\ve{0},
  \qquad
  \frac{\partial\ve{S}_t}{\partial\mu_t}=\ve{0},
  \qquad
  \frac{\partial p_{t,i}}{\partial\mu_t}
  =-\frac{k_{t,i}^2}{r_t}p_{t,i}^2\leq0.
  \label{eq:appendix-protection-current-invariance}
\end{equation}
Its first effect is on future gains. Holding the next token's parameters fixed and defining
$z_{t+1}=\sum_i\widehat p_{t+1,i}k_{t+1,i}^2$, we obtain
\begin{equation}
\begin{aligned}
  \frac{\partial z_{t+1}}{\partial\mu_t}
  &=-\frac{1}{r_t}\sum_i
    \alpha_{t+1,i}^2p_{t,i}^2k_{t,i}^2k_{t+1,i}^2\leq0, \\
  \frac{\partial\beta_{t+1}^{\mathrm{eff}}}{\partial\mu_t}
  &=\frac{r_{t+1}}{(r_{t+1}+z_{t+1})^2}
    \frac{\partial z_{t+1}}{\partial\mu_t}\leq0.
\end{aligned}
  \label{eq:appendix-protection-future-write}
\end{equation}
Thus $\mu$ is a protection scale: it leaves the current write unchanged while weakening future
writes along channels shared with the current key.

The matched 750M sweep exhibits the predicted reduction in write strength. At both initialization
and after training, the pooled mean $\beta^{\mathrm{eff}}$ decreases monotonically as $\mu$
increases:
\begin{table}[H]
  \centering
  \small
  \setlength{\tabcolsep}{8pt}
  \begin{tabular}{@{}lccc@{}}
    \toprule
    Information scale $\mu$ & Random init. & Trained at $\mu$ & Fixed-weight intervention \\
    \midrule
    $1$          & 0.842 & 0.943 & 0.886 \\
    $\sqrt{d_k}$ & 0.743 & 0.891 & 0.873 \\
    $d_k$        & 0.630 & 0.826 & 0.862 \\
    $4d_k$       & 0.580 & 0.805 & 0.857 \\
    \bottomrule
  \end{tabular}
  \caption{Mean effective write for the 750M Diagonal KDN ($d_k=128$). The first two columns use
  the matched random-initialization controls and checkpoints trained at each scale, evaluated on
  eight 2048-token sequences. The final column uses a separate 4096-token trajectory, changes only
  the runtime information scale of each trained checkpoint, and averages the resulting
  layer--head--token means across the four checkpoints; model weights are fixed, while downstream
  hidden states respond to the changed recurrence. Absolute levels should therefore be compared
  within, not across, these probe protocols. The intervention isolates the monotone protection
  effect from retraining compensation.}
  \label{tab:appendix-effective-write-scale}
\end{table}
The intervention reduces the mean from $0.886$ at $\mu=1$ to $0.857$ at $4d_k$, while the
matched trained models decrease from $0.943$ to $0.805$. Together with
Equation~\eqref{eq:appendix-protection-future-write}, this supports the interpretation of $\mu$ as
a future-write control rather than an increase in current observation strength.

\FloatBarrier

\section{Isotropic Kalman Delta Network}
\label{app:iso-kla}

\paragraph{Retained isotropic state.}
Following the notation of Section~\ref{sec:preliminaries},
$\widetilde{\ve{S}}_t$ is the latent random memory, while $\ve{S}_t$ and $\ve{P}_t$ are the retained
filtering mean and covariance. Isotropic KDN replaces the $d_k$ diagonal uncertainty values of
Diagonal KDN by one scalar per head:
\begin{equation}
  \ve{P}_t=b_t\ve{I},
  \qquad
  b_t>0.
  \label{eq:iso-cov-family}
\end{equation}

\paragraph{Two isotropic projections.}
The isotropic family is not closed under either stage of the Kalman recursion. Starting from
$\ve{P}_{t-1}=b_{t-1}\ve{I}$, a channel-wise transition
$\ve{D}_t=\diag(\ve{\alpha}_t)$ generally produces an anisotropic predictive covariance. Even
after projecting that prediction back to an isotropic covariance, conditioning on a key
$\ve{k}_t$ produces a dense posterior. Isotropic KDN therefore projects once after prediction and
once after conditioning.

For any $\ve{X}\in\mathbb{S}_{++}^{d_k}$, the Frobenius-nearest isotropic matrix is
\begin{equation}
  \Pi_{\mathrm{iso}}(\ve{X})
  =\argmin_{b\ve{I}:\,b>0}\frac12\|\ve{X}-b\ve{I}\|_F^2
  =\frac{1}{d_k}\Trace(\ve{X})\ve{I}.
  \label{eq:iso-trace-projection}
\end{equation}
Indeed,
$\|\ve{X}-b\ve{I}\|_F^2
=\|\ve{X}\|_F^2-2b\Trace(\ve{X})+d_kb^2$, so the projection preserves the average eigenvalue. We
apply this covariance projection after prediction, where the process update is affine. After
conditioning, we instead minimize reverse KL over isotropic Gaussians, which preserves the exact
posterior mean and retains the harmonic mean of the posterior covariance eigenvalues.

\paragraph{Predictive covariance.}
The memory mean follows the full channel-wise prediction
$\widehat{\ve{S}}_t=\ve{D}_t\ve{S}_{t-1}$. With isotropic process noise
$\ve{\Omega}_t=\omega_t\ve{I}$, the retained predictive covariance is
\begin{equation}
\begin{aligned}
  a_t
  &=\frac{1}{d_k}\Trace(\ve{D}_t^2)
   =\frac{1}{d_k}\sum_{i=1}^{d_k}\alpha_{t,i}^2,
  &
  \widehat b_t
  &=a_tb_{t-1}+\omega_t,
  \\
  \widehat{\ve{P}}_t
  &=\Pi_{\mathrm{iso}}\!\left(
      \ve{D}_t\ve{P}_{t-1}\ve{D}_t^{\tr}+\ve{\Omega}_t
    \right)
   =\widehat b_t\ve{I}.
\end{aligned}
  \label{eq:iso-kla-predict}
\end{equation}
Thus $\widehat b_t$ is the average predictive marginal variance. The projection is inactive when
$\ve{\alpha}_t$ is constant across channels, as in the scalar decay of Gated
DeltaNet~\citep{yang2024gdn}. For channel-wise decay as in KDA, the memory retains the full
transition while the uncertainty tracker keeps only its isotropic projection.

\paragraph{Posterior covariance.}
By the columnwise factorization, it suffices to consider value coordinate $j$. Let
$\widehat{\ve{s}}_{t,j}$ and $\ve{s}_{t,j}$ denote column $j$ of the predicted memory
$\widehat{\ve{S}}_t$ and filtered memory $\ve{S}_t$, respectively. Under the retained predictive
state, the one-step model is
\begin{equation}
  \widetilde{\ve{s}}_{t,j}\mid\mathcal{F}_{t-1}
  \sim\N(\widehat{\ve{s}}_{t,j},\widehat{\ve{P}}_t),
  \qquad
  v_{t,j}=\ve{k}_t^{\tr}\widetilde{\ve{s}}_{t,j}+e_{t,j},
  \qquad
  e_{t,j}\sim\N(0,r_t).
  \label{eq:iso-variational-model}
\end{equation}
The exact one-step filtering posterior under \eqref{eq:iso-variational-model}, treating
$\ve{k}_t$ and $r_t$ as known at step $t$, has the Kalman residual-update mean and a generally
dense covariance:
\begin{equation}
\begin{aligned}
  \ve{\kappa}_t
  &=\beta_t\ve{k}_t,
  &
  \beta_t
  &=\frac{\widehat b_t}{r_t+\widehat b_t\|\ve{k}_t\|_2^2},
  \\
  \ve{s}_{t,j}
  &=\widehat{\ve{s}}_{t,j}
    +\ve{\kappa}_t
      \left(v_{t,j}-\ve{k}_t^{\tr}\widehat{\ve{s}}_{t,j}\right),
  \\
  \ve{P}_t^\star
  &=(\ve{I}-\ve{\kappa}_t\ve{k}_t^{\tr})\widehat{\ve{P}}_t
  \\
  &=\widehat b_t\ve{I}
   -\frac{\widehat b_t^2\ve{k}_t\ve{k}_t^{\tr}}
    {r_t+\widehat b_t\|\ve{k}_t\|_2^2}.
\end{aligned}
  \label{eq:iso-exact-posterior}
\end{equation}
The star distinguishes this exact posterior covariance from the retained isotropic covariance
$\ve{P}_t$. Because $\ve{P}_t^\star$ is generally dense, we approximate the exact one-step
posterior within the isotropic family
\begin{equation*}
  \mathcal{Q}_{\mathrm{iso}}
  =\{\N(\ve{s},b\ve{I}):\ve{s}\in\R^{d_k},\ b>0\}.
\end{equation*}

\begin{proposition}[Online isotropic variational update]
\label{prop:iso-variational-inference}
Let
$\pi_{t,j}=p(\widetilde{\ve{s}}_{t,j}\mid
v_{t,j},\mathcal{F}_{t-1},\ve{k}_t,r_t)$ denote the exact one-step filtering posterior under
\eqref{eq:iso-variational-model}. The variational approximation is
\begin{equation}
  q_{t,j}
  =\argmin_{q\in\mathcal{Q}_{\mathrm{iso}}}
    \operatorname{KL}\!\left(q\,\|\,\pi_{t,j}\right).
  \label{eq:iso-variational-objective}
\end{equation}
Its unique solution is
\begin{equation}
\begin{aligned}
  q_{t,j}
  &=\N(\ve{s}_{t,j},b_t\ve{I}),
  \\
  \ve{s}_{t,j}
  &=\widehat{\ve{s}}_{t,j}
    +\ve{\kappa}_t
      \left(v_{t,j}-\ve{k}_t^{\tr}\widehat{\ve{s}}_{t,j}\right),
  \\
  b_t
  &=\left(
      \frac{1}{\widehat b_t}
      +\frac{\|\ve{k}_t\|_2^2}{d_k r_t}
    \right)^{-1}
   =\frac{\widehat b_t}
    {1+\widehat b_t\|\ve{k}_t\|_2^2/(d_k r_t)},
  \qquad
  \ve{\kappa}_t=\beta_t\ve{k}_t.
\end{aligned}
  \label{eq:iso-variational-solution}
\end{equation}
Thus the retained mean $\ve{s}_{t,j}$ is the exact one-step posterior mean, while $b_t\ve{I}$ is
the isotropic approximation to its covariance.
\end{proposition}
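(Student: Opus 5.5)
The argument mirrors the proof of Proposition~\ref{prop:diag-variational-inference}, specialized to a one-parameter covariance family. First, by Proposition~\ref{prop:kalman-associative-memory} applied to the one-step model \eqref{eq:iso-variational-model}, the exact posterior is $\pi_{t,j}=\N(\ve{s}^\star_{t,j},\ve{P}_t^\star)$. Its mean is the Kalman residual update, and since $\widehat{\ve{P}}_t=\widehat b_t\ve{I}$ the gain collapses to $\ve{\kappa}_t=\beta_t\ve{k}_t$, which gives \eqref{eq:iso-exact-posterior}. Its precision follows from the information form of Gaussian conditioning (equivalently, Sherman--Morrison on \eqref{eq:iso-exact-posterior}):
\[
\ve{\Lambda}_t^\star=(\ve{P}_t^\star)^{-1}=\widehat b_t^{-1}\ve{I}+r_t^{-1}\ve{k}_t\ve{k}_t^{\tr}\succ\ve{0}.
\]

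Next, I will write the reverse KL for a candidate $q=\N(\ve{s},b\ve{I})$ using the Gaussian KL formula:
\[
\operatorname{KL}(q\,\|\,\pi_{t,j})=\tfrac12(\ve{s}-\ve{s}^\star_{t,j})^{\tr}\ve{\Lambda}_t^\star(\ve{s}-\ve{s}^\star_{t,j})+\tfrac12\big[b\Trace(\ve{\Lambda}_t^\star)-d_k\log b\big]+C,
\]
where $C$ does not depend on $(\ve{s},b)$. The objective separates into a mean term and a variance term, so I will minimize each on its own.

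\begin{itemize}
\item \emph{Mean term.} Because $\ve{\Lambda}_t^\star$ is positive definite, this quadratic is strictly convex, and its unique minimizer is $\ve{s}=\ve{s}^\star_{t,j}$. This is the exact one-step posterior mean.
\item \emph{Variance term.} Differentiating in $b>0$ gives $\tfrac12(\Trace\ve{\Lambda}_t^\star-d_k/b)$, and the second derivative is $d_k/(2b^2)>0$. The unique stationary point is therefore the minimizer:
\[
b_t=\frac{d_k}{\Trace\ve{\Lambda}_t^\star}.
\]
\end{itemize}

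The last step is to evaluate the trace:
\[
\Trace\ve{\Lambda}_t^\star=\frac{d_k}{\widehat b_t}+\frac{\|\ve{k}_t\|_2^2}{r_t}.
\]
Dividing by $d_k$ yields
\[
b_t=\Big(\widehat b_t^{-1}+\frac{\|\ve{k}_t\|_2^2}{d_k r_t}\Big)^{-1},
\]
and rearranging gives the second form stated in \eqref{eq:iso-variational-solution}. The formula $b_t=d_k/\Trace\ve{\Lambda}_t^\star$ is the harmonic mean of the eigenvalues of $\ve{P}_t^\star$, which confirms the remark preceding the proposition.

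No step is a real obstacle. The only place needing care is confirming that the $\log\det$ and constant terms of the KL do not depend on $(\ve{s},b)$, and that the objective genuinely decouples, so that uniqueness follows from strict convexity in each block separately.
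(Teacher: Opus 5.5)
Your proof is correct and follows essentially the same argument as the paper. The paper diagonalizes $\ve{P}_t^\star$ into $\lambda_\parallel$ (along $\ve{k}_t$) and $\lambda_\perp=\widehat b_t$, then minimizes $\tfrac12[b(\lambda_\parallel^{-1}+(d_k-1)\lambda_\perp^{-1})-d_k\log b]$; this is exactly your $\tfrac12[b\Trace(\ve{\Lambda}_t^\star)-d_k\log b]$, since that bracketed sum equals $\Trace(\ve{\Lambda}_t^\star)$. Your information-form evaluation of the trace is a slightly cleaner route to the same harmonic-mean formula, and it also covers $\ve{k}_t=\ve{0}$ without needing a distinguished eigendirection.
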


\begin{proof}
The exact posterior is Gaussian with mean $\ve{s}_{t,j}$ and covariance $\ve{P}_t^\star$ from
\eqref{eq:iso-exact-posterior}. Its covariance eigenvalue along $\ve{k}_t$ and its eigenvalue on
the orthogonal complement are, respectively,
\begin{equation}
  \lambda_{\parallel}
  =\frac{\widehat b_t r_t}
    {r_t+\widehat b_t\|\ve{k}_t\|_2^2},
  \qquad
  \lambda_{\perp}=\widehat b_t.
  \label{eq:iso-posterior-eigenvalues}
\end{equation}
For $q=\N(\ve{s},b\ve{I})$, minimizing the Gaussian KL first sets
$\ve{s}=\ve{s}_{t,j}$. Up to constants independent of $b$, the remaining objective is
\begin{equation}
  \frac{1}{2}
  \left[
    b\left(\lambda_{\parallel}^{-1}
      +(d_k-1)\lambda_{\perp}^{-1}\right)
    -d_k\log b
  \right].
  \label{eq:iso-variational-covariance-objective}
\end{equation}
Its unique minimizer is the harmonic mean of these covariance eigenvalues,
\begin{equation}
  b_t
  =\frac{d_k}
    {\lambda_{\parallel}^{-1}+(d_k-1)\lambda_{\perp}^{-1}}
  =\left(
      \frac{1}{\widehat b_t}
      +\frac{\|\ve{k}_t\|_2^2}{d_k r_t}
    \right)^{-1},
\end{equation}
which is \eqref{eq:iso-variational-solution}.
\end{proof}

Starting from the isotropic variational update above, we introduce an information scale $\mu>0$
by setting
$b_t^{-1}-\widehat b_t^{-1}=\mu\|\ve{k}_t\|_2^2/(d_k r_t)$; $\mu=1$ recovers the
variational update.
Like the per-channel diagonal recurrence, the resulting scalar recurrence is a M\"obius map and
therefore admits an associative prefix scan. Isotropic KDN is summarized below.

\refstepcounter{definition}\label{prop:isotropic-kalman-linear-attention}
\begin{tcolorbox}[
  title={Proposition~\thedefinition: Isotropic Kalman Delta Network},
  breakable,
  colback=blue!7,
  colbacktitle=blue!22,
  colframe=blue!45!black,
  coltitle=black,
  fonttitle=\bfseries,
  boxrule=0.4pt,
  arc=1pt,
  left=5pt,
  right=5pt,
  top=5pt,
  bottom=5pt
]
Let the predictive covariance be restricted to the isotropic family
$\{b\ve{I}:b>0\}$ and let
$\ve{D}_t$, $\ve{\Omega}_t$, and $r_t$ be parameterized by
\begin{equation}
\begin{aligned}
  \ve{\alpha}_t &= \sigma(\ve{W}_{\alpha}\ve{x}_t+\ve{b}_{\alpha}),
  &
  \omega_t &= \operatorname{softplus}(\ve{w}_{\omega}^{\tr}\ve{x}_t+b_\omega),
  &
  r_t &= r_{\min}+\operatorname{softplus}(\ve{w}_{r}^{\tr}\ve{x}_t+b_r),
  \\[2pt]
  \ve{D}_t &= \diag(\ve{\alpha}_t),
  &
  \ve{\Omega}_t &= \omega_t\ve{I}.
\end{aligned}
\label{eq:iso-kla-box-parameterization}
\end{equation}
Here $\ve{x}_t$ is the token hidden state, and all weights $\ve{W}_{\ast}$ and
$\ve{w}_{\ast}$, together with their biases, are learnable parameters.
Define the trace-averaged transition factor
$a_t=d_k^{-1}\sum_i\alpha_{t,i}^2$.
For an information scale $\mu>0$, define
\begin{equation}
  u_t=\mu\frac{\|\ve{k}_t\|_2^2}{d_k r_t},
  \qquad
  \widehat b_t
  =a_tb_{t-1}+\omega_t,
  \qquad
  b_t
  =\frac{\widehat b_t}{1+u_t\widehat b_t},
  \qquad b_0=1.
  \label{eq:iso-kla-box-cov-update}
\end{equation}
The covariance states are computed by the scalar M\"obius scan
\begin{equation}
  \ve{M}_t=
  \begin{bmatrix}
    a_t & \omega_t\\
    u_ta_t & 1+u_t\omega_t
  \end{bmatrix},
  \qquad
  \begin{bmatrix}n_t\\ d_t\end{bmatrix}
  =
  \ve{M}_t
  \begin{bmatrix}n_{t-1}\\ d_{t-1}\end{bmatrix},
  \qquad
  b_t=\frac{n_t}{d_t},
  \qquad
  \begin{bmatrix}n_0\\ d_0\end{bmatrix}
  =
  \begin{bmatrix}1\\ 1\end{bmatrix}.
  \label{eq:iso-kla-box-mobius}
\end{equation}
The adaptive gain at token $t$ uses the pre-write predictive covariance $\widehat b_t$:
\begin{equation}
  \ve{\kappa}_t
  =\frac{\widehat b_t\ve{k}_t}
    {r_t+\widehat b_t\|\ve{k}_t\|_2^2}
  \equiv\beta_t\ve{k}_t.
\label{eq:iso-kla-box-gain}
\end{equation}
With $\beta_t$ available before the memory scan, the memory update has the same affine form as KDA:
\begin{equation}
\begin{aligned}
  \ve{S}_t
  &=(\ve{I}-\beta_t\ve{k}_t\ve{k}_t^{\tr})\ve{D}_t\ve{S}_{t-1}
    +\beta_t\ve{k}_t\ve{v}_t^{\tr},
  &
  \ve{o}_t &= \ve{S}_t^{\tr}\ve{q}_t .
\end{aligned}
  \label{eq:iso-kla-box-memory}
\end{equation}
\end{tcolorbox}

At each step, the gain is formed from $\widehat b_t$ before $b_t$ is updated, so the token-$t$
information increment does not affect the token-$t$ write; it affects only subsequent gains.
In all reported Isotropic KDN runs, we fix the information scale to $\mu=d_k$.

Isotropic KDN therefore preserves the predict--update decomposition and Kalman residual write, but
compresses the uncertainty state to one scalar per head. Its gain is constrained to
$\ve{\kappa}_t=\beta_t\ve{k}_t$; Diagonal KDN replaces $b_t$ by the vector $\ve{p}_t$ and permits
an anisotropic gain direction.

\section{Hardware-Efficient Chunkwise Algorithm}
\label{sec:chunkwise-algorithm}

Diagonal KDN is evaluated in two stages. A chunked M\"obius scan first computes every adaptive
gain $\ve{\kappa}_t$ from the diagonal covariance recurrence. Once these gains are available,
the memory recurrence is an input-only asymmetric delta rule and can be evaluated with a compact
WY transform, following the chunkwise algorithms developed for DeltaNet, KDA, and
GDN-2~\citep{yang2024parallelizing,kimiteam2025kimilinear,hatamizadeh2026gdn2}. The two stages
therefore use a M\"obius scan and a WY memory scan, respectively; both expose parallel
chunk-local work and retain only a short recurrence across chunks.

\subsection{Chunkwise Gain Computation}
\label{sec:chunkwise-gain}

\paragraph{Associative covariance scan.}
Because the retained covariance is diagonal, each channel evolves independently. Substituting
\eqref{eq:diag-cov-update}, with $u_{t,i}=\mu k_{t,i}^2/r_t$, yields the M\"obius recurrence
\begin{equation}
  p_{t,i}
  =\frac{\alpha_{t,i}^2p_{t-1,i}+\omega_{t,i}}
    {u_{t,i}\alpha_{t,i}^2p_{t-1,i}+1+u_{t,i}\omega_{t,i}}.
  \label{eq:diag-cov-mobius}
\end{equation}
As shown in~\citep{shaj2026kalmanlinearattention}, representing each M\"obius update by a $2\times2$ matrix turns composition into matrix
multiplication and yields the associative scan
\begin{equation}
  \begin{gathered}
    \ve{M}_{t,i}
    =\begin{bmatrix}
      \alpha_{t,i}^2 & \omega_{t,i}\\
      u_{t,i}\alpha_{t,i}^2 & 1+u_{t,i}\omega_{t,i}
    \end{bmatrix},
    \quad
    \begin{bmatrix}n_{t,i}\\d_{t,i}\end{bmatrix}
    =\ve{M}_{t,i}
      \begin{bmatrix}n_{t-1,i}\\d_{t-1,i}\end{bmatrix},
    \quad
    p_{t,i}=\frac{n_{t,i}}{d_{t,i}},
    \quad
    \begin{bmatrix}n_{0,i}\\d_{0,i}\end{bmatrix}
    =\begin{bmatrix}1\\1\end{bmatrix}.
  \end{gathered}
  \label{eq:diag-cov-scan}
\end{equation}
Thus the approximation reduces the uncertainty state from a dense $d_k\times d_k$ matrix to
$d_k$ scalars and replaces the sequential Riccati recursion with an associative scan of
logarithmic parallel depth.

\paragraph{Chunkwise gain scan.}
Let $\ve{z}_{t,i}=[n_{t,i},d_{t,i}]^{\tr}$ be the homogeneous covariance state in
\eqref{eq:diag-cov-scan}. Partition the sequence
into gain chunks $\mathcal{I}_m=\{\ell_m,\ldots,r_m\}$ of length at most $C_g$. Within each chunk,
the token-local M\"obius matrices are reduced in temporal order to
\begin{equation}
  \overline{\ve{M}}_{m,i}
  =\ve{M}_{r_m,i}\ve{M}_{r_m-1,i}\cdots\ve{M}_{\ell_m,i}.
  \label{eq:chunk-gain-summary}
\end{equation}
An exclusive prefix scan over these summaries gives the state entering every chunk,
\begin{equation}
  \ve{z}^{\mathrm{in}}_{m,i}
  =\overline{\ve{M}}_{m-1,i}\cdots\overline{\ve{M}}_{0,i}\ve{z}_{0,i},
  \qquad
  \ve{z}_{0,i}=\begin{bmatrix}p_{0,i}\\1\end{bmatrix},
  \label{eq:chunk-gain-carry}
\end{equation}
where the product is the identity for the first chunk. Starting from this exclusive carry, all
chunks are replayed in parallel. For $t\in\mathcal{I}_m$, the state immediately before token $t$
is
\begin{equation}
  \ve{z}_{t-1,i}
  =\ve{M}_{t-1,i}\cdots\ve{M}_{\ell_m,i}\ve{z}^{\mathrm{in}}_{m,i},
  \label{eq:chunk-gain-local-prefix}
\end{equation}
with an empty product at $t=\ell_m$. The gain is emitted from this exclusive state before
$\ve{M}_{t,i}$ is applied:
\begin{equation}
\begin{aligned}
  \widehat p_{t,i}
  &=\alpha_{t,i}^2\frac{n_{t-1,i}}{d_{t-1,i}}+\omega_{t,i},
  &
  \ve{\kappa}_t
  &=\frac{\widehat{\ve{p}}_t\odot\ve{k}_t}
  {r_t+\sum_i\widehat p_{t,i}k_{t,i}^2},
  &
  \ve{z}_{t,i}&=\ve{M}_{t,i}\ve{z}_{t-1,i}.
\end{aligned}
  \label{eq:chunk-gain-emission}
\end{equation}
This gives a three-pass implementation: compute one $2\times2$ summary per chunk and channel,
scan the summaries to obtain exclusive chunk carries, and replay the chunks in parallel to emit
$\ve{\kappa}_t$. The ratio $n/d$ is invariant to a common rescaling, so after each multiplication
we divide every homogeneous vector or matrix by its largest-magnitude entry. This fp32
renormalization prevents overflow when $\omega_{t,i}$ is very small. Our gain kernel uses
$C_g=256$; the short carry pass is serial over the number of chunks, although the summaries also
admit a fully associative prefix scan.

\subsection{Chunkwise Memory Scan}
\label{sec:chunkwise-memory}

With $\ve{\kappa}_t$ fixed, partition the sequence independently into memory chunks of length
$C_m$. Consider one chunk, suppress its global index, and let $\ve{S}^{\mathrm{in}}$ be its
incoming memory. For local position $r$, define the coordinatewise cumulative decay
\begin{equation}
  \ve{\gamma}_r=\prod_{j=1}^{r}\ve{\alpha}_j,
  \qquad
  \ve{\gamma}_0=\ve{1},
  \qquad
  \ve{S}_r=\diag(\ve{\gamma}_r)\ve{R}_r.
  \label{eq:chunk-memory-normalization}
\end{equation}
The normalized state obeys an ordinary two-key delta rule,
\begin{equation}
  \ve{R}_r
  =\left(\ve{I}-\overline{\ve{\kappa}}_r\overline{\ve{k}}_r^{\tr}\right)
    \ve{R}_{r-1}
   +\overline{\ve{\kappa}}_r\ve{v}_r^{\tr},
  \qquad
  \overline{\ve{k}}_r=\ve{\gamma}_r\odot\ve{k}_r,
  \qquad
  \overline{\ve{\kappa}}_r=\ve{\kappa}_r\oslash\ve{\gamma}_r.
  \label{eq:chunk-memory-normalized-recurrence}
\end{equation}
Let $\ve{K}$, $\ve{K}^{(\kappa)}$, $\ve{Q}$, $\ve{V}$, and $\ve{\Gamma}$ stack the chunk's
rows $\ve{k}_r^{\tr}$, $\ve{\kappa}_r^{\tr}$, $\ve{q}_r^{\tr}$, $\ve{v}_r^{\tr}$, and
$\ve{\gamma}_r^{\tr}$, respectively. Let $\tau$ denote the query scale, with
$\tau=d_k^{-1/2}$ in our implementation and $\tau=1$ when it is absorbed into $\ve{q}_r$. Define
\begin{equation}
  \overline{\ve{K}}=\ve{\Gamma}\odot\ve{K},
  \qquad
  \overline{\ve{K}}^{(\kappa)}=\ve{K}^{(\kappa)}\oslash\ve{\Gamma},
  \qquad
  \overline{\ve{Q}}=\tau\ve{\Gamma}\odot\ve{Q}.
  \label{eq:chunk-memory-scaled-keys}
\end{equation}
The compact WY factors are
\begin{equation}
\begin{aligned}
  \ve{T}
  &=\operatorname{tril}\!\left(
      \overline{\ve{K}}\bigl(\overline{\ve{K}}^{(\kappa)}\bigr)^{\tr},-1
    \right),
  &
  \ve{A}&=(\ve{I}_{C_m}+\ve{T})^{-1},
  \\
  \ve{W}&=\ve{A}\overline{\ve{K}},
  &
  \ve{U}&=\ve{A}\ve{V},
  &
  \ve{R}&=\ve{U}-\ve{W}\ve{S}^{\mathrm{in}}.
\end{aligned}
  \label{eq:chunk-memory-wy}
\end{equation}
Here $\ve{T}$ is strictly lower triangular. The implementation computes and stores $\ve{A}$ with a
tiled unit-lower-triangular solve, then forms $\ve{W}$ and $\ve{U}$. Define the
final-decay-adjusted write matrix and the inclusive causal query--write matrix by
\begin{equation}
\begin{aligned}
  \left[\ve{K}_{\mathrm{tail}}^{(\kappa)}\right]_{r,:}
  &=\left[
      (\ve{\gamma}_{C_m}\oslash\ve{\gamma}_r)\odot\ve{\kappa}_r
    \right]^{\tr},
  &
  \ve{A}_{q\kappa}
  &=\operatorname{tril}\!\left(
      \overline{\ve{Q}}\bigl(\overline{\ve{K}}^{(\kappa)}\bigr)^{\tr}
    \right).
\end{aligned}
  \label{eq:chunk-memory-tail-output}
\end{equation}
The outputs for the whole chunk and its outgoing memory are then
\begin{equation}
\begin{aligned}
  \ve{O}
  &=\overline{\ve{Q}}\ve{S}^{\mathrm{in}}+\ve{A}_{q\kappa}\ve{R},
  &
  \ve{S}^{\mathrm{out}}
  &=\diag(\ve{\gamma}_{C_m})\ve{S}^{\mathrm{in}}
    +\bigl(\ve{K}_{\mathrm{tail}}^{(\kappa)}\bigr)^{\tr}\ve{R}.
\end{aligned}
  \label{eq:chunk-memory-output-state}
\end{equation}
The diagonal of $\ve{A}_{q\kappa}$ is included because $\ve{o}_t$ reads the state after the
current write. These equations are exact for the Diagonal KDN recurrence. They reduce to KDA when
$\ve{\kappa}_t=\beta_t\ve{k}_t$ and otherwise use the same two-key WY structure as GDN-2, with
$\ve{\kappa}_t$ as the write direction and $\ve{k}_t$ as the erase direction. All chunk-local
factors are computed in parallel; only $\ve{S}^{\mathrm{in}}$ is carried across memory chunks.
The implementation forms decay ratios from cumulative log-decay differences for numerical
stability and uses $C_m\in\{32,64\}$, independently of the gain-chunk length.

\end{document}

%% file: preamble.tex
\usepackage{graphicx}
\usepackage{float,epstopdf}
\usepackage{bbm}

\usepackage{microtype}

\usepackage{natbib}
\setcitestyle{square}

\usepackage{subcaption}
\usepackage{booktabs}

\usepackage{amsmath}
\usepackage{amssymb}
\usepackage{mathtools}
\usepackage{amsthm}
\usepackage{dsfont}
\usepackage{multicol}
\usepackage{makecell}
\usepackage{multirow} 
\usepackage{amsfonts} 
\usepackage{mathrsfs}
\usepackage[amssymb, thickqspace]{SIunits}
\usepackage{enumitem}
\usepackage{pgfplotstable}
\usepackage{lipsum}		

\usepackage{microtype}
\usepackage{graphicx}
\usepackage{booktabs} 
\usepackage[table]{xcolor}
\usepackage{arydshln}
\usepackage[normalem]{ulem} 

\usepackage{cases}
\usepackage{wrapfig}

\usepackage{url}

\usepackage{thmtools}
\usepackage{thm-restate}
\usepackage{tabu}

\definecolor{huskypurple}{HTML}{4B2E83}

\usepackage{titletoc}

\usepackage{listings}
\lstdefinestyle{promptstyle}{
  basicstyle=\ttfamily\footnotesize,
  breaklines=true,
  breakautoindent=false,
  breakindent=0pt,
  postbreak=\mbox{\textcolor{gray}{$\hookrightarrow$}\space},
  columns=fullflexible,
  keepspaces=true,
  frame=single,
  framesep=5pt,
  xleftmargin=6pt,
  xrightmargin=6pt,
  aboveskip=8pt,
  belowskip=8pt,
  showstringspaces=false,
}

%% file: main.bib
@String(AAAI = {AAAI})

@inproceedings{hsieh2024ruler,
title={{RULER}: What{\textquoteright}s the Real Context Size of Your Long-Context Language Models?},
author={Cheng-Ping Hsieh and Simeng Sun and Samuel Kriman and Shantanu Acharya and Dima Rekesh and Fei Jia and Boris Ginsburg},
booktitle={First Conference on Language Modeling},
year={2024},
}

@article{yang2024parallelizing,
  title={Parallelizing linear transformers with the delta rule over sequence length},
  author={Yang, Songlin and Wang, Bailin and Zhang, Yu and Shen, Yikang and Kim, Yoon},
  journal={arXiv preprint arXiv:2406.06484},
  year={2024}
}

@article{shaj2026kalmanlinearattention,
  title={Kalman Linear Attention: Parallel Bayesian Filtering For Efficient Language Modelling and State Tracking},
  author={Shaj, Vaisakh and Barker, Cameron and Scannell, Aidan and Szecsenyi, Andras and Crowley, Elliot J. and Storkey, Amos},
  journal={arXiv preprint arXiv:2602.10743},
  year={2026}
}

@inproceedings{katharopoulos2020transformers,
  title={Transformers are rnns: Fast autoregressive transformers with linear attention},
  author={Katharopoulos, Angelos and Vyas, Apoorv and Pappas, Nikolaos and Fleuret, Fran{\c{c}}ois},
  booktitle={International conference on machine learning},
  pages={5156--5165},
  year={2020},
  organization={PMLR}
}

@article{wang2020linformer,
  title={Linformer: Self-attention with linear complexity},
  author={Wang, Sinong and Li, Belinda Z and Khabsa, Madian and Fang, Han and Ma, Hao},
  journal={arXiv preprint arXiv:2006.04768},
  year={2020}
}

@article{yang2023gated,
  title={Gated linear attention transformers with hardware-efficient training},
  author={Yang, Songlin and Wang, Bailin and Shen, Yikang and Panda, Rameswar and Kim, Yoon},
  journal={arXiv preprint arXiv:2312.06635},
  year={2023}
}

@article{su2024roformer,
  title={Roformer: Enhanced transformer with rotary position embedding},
  author={Su, Jianlin and Ahmed, Murtadha and Lu, Yu and Pan, Shengfeng and Bo, Wen and Liu, Yunfeng},
  journal={Neurocomputing},
  volume={568},
  pages={127063},
  year={2024},
  publisher={Elsevier}
}

@article{vaswani2017attention,
  title={Attention is all you need},
  author={Vaswani, Ashish and Shazeer, Noam and Parmar, Niki and Uszkoreit, Jakob and Jones, Llion and Gomez, Aidan N and Kaiser, {\L}ukasz and Polosukhin, Illia},
  journal={Advances in neural information processing systems},
  volume={30},
  year={2017}
}

@inproceedings{schlag2021linear,
  title={Linear transformers are secretly fast weight programmers},
  author={Schlag, Imanol and Irie, Kazuki and Schmidhuber, J{\"u}rgen},
  booktitle={International Conference on Machine Learning (ICML)},
  year={2021},
  note={arXiv:2102.11174}
}

@article{yang2024gdn,
  title={Gated delta networks: Improving Mamba2 with delta rule},
  author={Yang, Songlin and Kautz, Jan and Hatamizadeh, Ali},
  journal={arXiv preprint arXiv:2412.06464},
  year={2024}
}

@inproceedings{gu2022efficiently,
  title={Efficiently Modeling Long Sequences with Structured State Spaces},
  author={Gu, Albert and Goel, Karan and R{\'e}, Christopher},
  booktitle={International Conference on Learning Representations},
  year={2022}
}

@inproceedings{smith2023simplified,
  title={Simplified State Space Layers for Sequence Modeling},
  author={Smith, Jimmy T. H. and Warrington, Andrew and Linderman, Scott W.},
  booktitle={International Conference on Learning Representations},
  year={2023}
}

@article{gu2023mamba,
  title={{Mamba}: Linear-Time Sequence Modeling with Selective State Spaces},
  author={Gu, Albert and Dao, Tri},
  journal={arXiv preprint arXiv:2312.00752},
  year={2023}
}

@article{dao2024transformers,
  title={Transformers are SSMs: Generalized Models and Efficient Algorithms Through Structured State Space Duality},
  author={Dao, Tri and Gu, Albert},
  journal={arXiv preprint arXiv:2405.21060},
  year={2024}
}

@article{penedo2024fineweb,
  title={The FineWeb Datasets: Decanting the Web for the Finest Text Data at Scale},
  author={Penedo, Guilherme and Kydl{\'i}{\v c}ek, Hynek and Lozhkov, Anton and Mitchell, Margaret and Raffel, Colin and von Werra, Leandro and Wolf, Thomas},
  journal={arXiv preprint arXiv:2406.17557},
  year={2024}
}

@inproceedings{lahoti2026mamba3,
  title={Mamba-3: Improved Sequence Modeling Using State Space Principles},
  author={Lahoti, Aakash and Li, Kevin Y. and Chen, Berlin and Wang, Caitlin and Bick, Aviv and Kolter, J. Zico and Dao, Tri and Gu, Albert},
  booktitle={The Fourteenth International Conference on Learning Representations},
  year={2026}
}

@article{hatamizadeh2026gdn2,
  title={Gated DeltaNet-2: Decoupling erase and write in linear attention},
  author={Hatamizadeh, Ali and Choi, Yejin and Kautz, Jan},
  journal={arXiv preprint arXiv:2605.22791},
  year={2026}
}

@article{kimiteam2025kimilinear,
  title={Kimi Linear: An Expressive, Efficient Attention Architecture},
  author={{Kimi Team}},
  journal={arXiv preprint arXiv:2510.26692},
  year={2025}
}

@article{lin2025forgetting,
  title={Forgetting Transformer: Softmax Attention with a Forget Gate},
  author={Lin, Zhixuan and others},
  journal={arXiv preprint arXiv:2503.02130},
  year={2025}
}

@article{zhong2025understanding,
  title={Understanding Transformer from the Perspective of Associative Memory},
  author={Zhong, Shu and others},
  journal={arXiv preprint arXiv:2505.19488},
  year={2025}
}

@article{yang2025path,
  title={{PaTH Attention}: Position Encoding via Accumulating Householder Transformations},
  author={Yang, Songlin and others},
  journal={arXiv preprint arXiv:2505.16381},
  year={2025}
}

@article{hu2025comba,
  title={{Comba}: Improving Bilinear {RNNs} with Closed-loop Control},
  author={Hu, Jiaxi and others},
  journal={arXiv preprint arXiv:2506.02475},
  year={2025}
}

@article{peng2025rwkv7,
  title={{RWKV-7} ``Goose'' with Expressive Dynamic State Evolution},
  author={Peng, Bo and others},
  journal={arXiv preprint arXiv:2503.14456},
  year={2025}
}

@article{kalman1960new,
  title={A New Approach to Linear Filtering and Prediction Problems},
  author={Kalman, Rudolf E.},
  journal={Journal of Basic Engineering},
  volume={82},
  number={1},
  pages={35--45},
  year={1960},
  doi={10.1115/1.3662552}
}

@misc{arora2024justreadtwice,
  title={Just Read Twice: Closing the Recall Gap for Recurrent Language Models},
  author={Arora, Simran and Timalsina, Aman and Singhal, Aaryan and Spector, Benjamin and Eyuboglu, Sabri and Zhao, Xinyi and Rao, Ashish and Rudra, Atri and R{\'e}, Christopher},
  year={2024},
  eprint={2407.05483},
  archivePrefix={arXiv},
  primaryClass={cs.CL},
  url={https://arxiv.org/abs/2407.05483}
}

@article{bui2026make,
  title={Make Each Token Count: Towards Improving Long-Context Performance with KV Cache Eviction},
  author={Bui, Ngoc and Nguyen, Hieu Trung and Cohan, Arman and Ying, Rex},
  journal={arXiv preprint arXiv:2605.09649},
  year={2026}
}

@inproceedings{xiao2024efficient,
  title={Efficient streaming language models with attention sinks},
  author={Xiao, Guangxuan and Tian, Yuandong and Chen, Beidi and Han, Song and Lewis, Mike},
  booktitle={International Conference on Learning Representations},
  volume={2024},
  pages={21875--21895},
  year={2024}
}

@inproceedings{bui2026cache,
  title={Cache what lasts: Token retention for memory-bounded kv cache in llms},
  author={Bui, Ngoc and Sharma, Shubham and Lamba, Simran and Mishra, Saumitra and Ying, Rex},
  booktitle={International Conference on Learning Representations},
  volume={2026},
  pages={120316--120342},
  year={2026}
}

@article{liu2025deepseek,
  title={Deepseek-v3. 2: Pushing the frontier of open large language models},
  author={Liu, Aixin and Mei, Aoxue and Lin, Bangcai and Xue, Bing and Wang, Bingxuan and Xu, Bingzheng and Wu, Bochao and Zhang, Bowei and Lin, Chaofan and Dong, Chen and others},
  journal={arXiv preprint arXiv:2512.02556},
  year={2025}
}

@article{tumma2026preconditioned,
  title={Preconditioned {DeltaNet}: Curvature-aware Sequence Modeling for Linear Recurrences},
  author={Tumma, Neehal and Loo, Noel and Rus, Daniela},
  journal={arXiv preprint arXiv:2604.21100},
  year={2026}
}

@inproceedings{merity2017pointer,
  title={Pointer Sentinel Mixture Models},
  author={Merity, Stephen and Xiong, Caiming and Bradbury, James and Socher, Richard},
  booktitle={International Conference on Learning Representations},
  year={2017}
}

@inproceedings{paperno2016lambada,
  title={The {LAMBADA} Dataset: Word Prediction Requiring a Broad Discourse Context},
  author={Paperno, Denis and Kruszewski, Germ{\'a}n and Lazaridou, Angeliki and Pham, Ngoc Quan and Bernardi, Raffaella and Pezzelle, Sandro and Baroni, Marco and Boleda, Gemma and Fern{\'a}ndez, Raquel},
  booktitle={Proceedings of the 54th Annual Meeting of the Association for Computational Linguistics},
  pages={1525--1534},
  year={2016}
}

@inproceedings{bisk2020piqa,
  title={{PIQA}: Reasoning about Physical Commonsense in Natural Language},
  author={Bisk, Yonatan and Zellers, Rowan and Le Bras, Ronan and Gao, Jianfeng and Choi, Yejin},
  booktitle={Proceedings of the AAAI Conference on Artificial Intelligence},
  pages={7432--7439},
  year={2020}
}

@inproceedings{zellers2019hellaswag,
  title={{HellaSwag}: Can a Machine Really Finish Your Sentence?},
  author={Zellers, Rowan and Holtzman, Ari and Bisk, Yonatan and Farhadi, Ali and Choi, Yejin},
  booktitle={Proceedings of the 57th Annual Meeting of the Association for Computational Linguistics},
  pages={4791--4800},
  year={2019}
}

@inproceedings{sakaguchi2020winogrande,
  title={{WinoGrande}: An Adversarial Winograd Schema Challenge at Scale},
  author={Sakaguchi, Keisuke and Le Bras, Ronan and Bhagavatula, Chandra and Choi, Yejin},
  booktitle={Proceedings of the AAAI Conference on Artificial Intelligence},
  pages={8732--8740},
  year={2020}
}

@article{clark2018arc,
  title={Think You Have Solved Question Answering? Try {ARC}, the {AI2} Reasoning Challenge},
  author={Clark, Peter and Cowhey, Isaac and Etzioni, Oren and Khot, Tushar and Sabharwal, Ashish and Schoenick, Carissa and Tafjord, Oyvind},
  journal={arXiv preprint arXiv:1803.05457},
  year={2018}
}

@inproceedings{lockard2019openceres,
  title={{OpenCeres}: When Open Information Extraction Meets the Semi-Structured Web},
  author={Lockard, Colin and Shiralkar, Prashant and Dong, Xin Luna},
  booktitle={Proceedings of the 2019 Conference of the North American Chapter of the Association for Computational Linguistics},
  pages={3047--3056},
  year={2019}
}

@article{arora2023evaporate,
  title={Language Models Enable Simple Systems for Generating Structured Views of Heterogeneous Data Lakes},
  author={Arora, Simran and Yang, Brandon and Eyuboglu, Sabri and Narayan, Avanika and Hojel, Andrew and Trummer, Immanuel and R{\'e}, Christopher},
  journal={Proceedings of the VLDB Endowment},
  year={2023}
}

@inproceedings{rajpurkar2018squad,
  title={Know What You Don't Know: Unanswerable Questions for {SQuAD}},
  author={Rajpurkar, Pranav and Jia, Robin and Liang, Percy},
  booktitle={Proceedings of the 56th Annual Meeting of the Association for Computational Linguistics},
  pages={784--789},
  year={2018}
}

@inproceedings{joshi2017triviaqa,
  title={{TriviaQA}: A Large Scale Distantly Supervised Challenge Dataset for Reading Comprehension},
  author={Joshi, Mandar and Choi, Eunsol and Weld, Daniel and Zettlemoyer, Luke},
  booktitle={Proceedings of the 55th Annual Meeting of the Association for Computational Linguistics},
  pages={1601--1611},
  year={2017}
}

@article{kwiatkowski2019natural,
  title={Natural Questions: A Benchmark for Question Answering Research},
  author={Kwiatkowski, Tom and Palomaki, Jennimaria and Redfield, Olivia and Collins, Michael and Parikh, Ankur and Alberti, Chris and Epstein, Danielle and Polosukhin, Illia and Devlin, Jacob and Lee, Kenton and Toutanova, Kristina and Jones, Llion and Kelcey, Matthew and Chang, Ming-Wei and Dai, Andrew M. and Uszkoreit, Jakob and Le, Quoc and Petrov, Slav},
  journal={Transactions of the Association for Computational Linguistics},
  volume={7},
  pages={452--466},
  year={2019}
}

@inproceedings{dua2019drop,
  title={{DROP}: A Reading Comprehension Benchmark Requiring Discrete Reasoning over Paragraphs},
  author={Dua, Dheeru and Wang, Yizhong and Dasigi, Pradeep and Stanovsky, Gabriel and Singh, Sameer and Gardner, Matt},
  booktitle={Proceedings of the 2019 Conference of the North American Chapter of the Association for Computational Linguistics},
  pages={2368--2378},
  year={2019}
}

@article{dowling2026memory,
  title={Memory by Design: Probabilistic Sequence Layers},
  author={Dowling, Matthew and Jeon, Hyungju and Savin, Cristina and Park, Il Memming},
  journal={arXiv preprint arXiv:2605.31163},
  year={2026},
  url={https://arxiv.org/abs/2605.31163}
}
